\theoremstyle{plain}
\newtheorem{theorem}{Theorem}[section]
\newtheorem{proposition}[theorem]{Proposition}
\newtheorem{lemma}[theorem]{Lemma}
\theoremstyle{definition}
\theoremstyle{remark}
\tikzset{
	-,  
	>=stealth, 
	shorten >=2pt, shorten <=2pt, 
	node distance=2.5cm, 
	every state/.style={draw=blue!55,very thick,fill=blue!20}, 
	initial text=$ $, 
 }
\renewcommand{\cite}[1]{\citep{#1}}
\newcommand{\cV}{\mathcal{V}} 
\newcommand{\cE}{\mathcal{E}} 
\newcommand{\bX}{\mathbf{X}} 
\newcommand{\bx}{\mathbf{x}} 
\newcommand{\bY}{\mathbf{Y}} 
\newcommand{\be}{\mathbf{e}} 
\newcommand{\RR}{\mathbb{R}}
\newcommand{\bP}{\mathbf{P}} 
\newcommand{\bp}{\mathbf{p}} %
\newcommand{\mset}[1]{\{\!\{#1\}\!\}}
\newcommand{\cX}{\mathcal{X}}
\newcommand{\cD}{\mathcal{D}}
\newcommand{\A}{\mathbf{A}}
\newcommand{\D}{\mathbf{D}}
\newcommand{\V}{\mathcal{V}}
\newcommand{\func}[1]{\mathtt{#1}}
\newcommand{\Softmax}{\func{Softmax}}
\newcommand{\multiset}[1]{\{\!\!\{#1\}\!\!\}}
\newcommand{\rw}{\mathbf{M}}
\newcommand{\p}{\mathbf{p}}
\def\eqref#1{equation~\ref{#1}}
\def\1{\bm{1}}
\def\rd{{\textnormal{d}}}
\def\rw{{\textnormal{w}}}
\DeclareMathAlphabet{\mathsfit}{\encodingdefault}{\sfdefault}{m}{sl}
\SetMathAlphabet{\mathsfit}{bold}{\encodingdefault}{\sfdefault}{bx}{n}
\title{Plain Transformers Can be Powerful Graph Learners}
\author{\name Liheng Ma 
\\
      \addr McGill University \& Mila - Quebec AI Institute
      \AND
      \name Soumyasundar Pal  \\
      \addr Huawei Noah's Ark Lab, Montreal
            \AND
      \name Yingxue Zhang  \\
      \addr Huawei Noah's Ark Lab, Montreal
      \AND
      \name Philip Torr \\
      \addr University of Oxford
      \AND
      \name Mark Coates  \\
      \addr McGill University \& Mila - Quebec AI Institute
    }
\renewcommand{\eqref}[1]{(\ref{#1})}
\begin{document}

\maketitle

\begin{abstract}

Transformers have attained outstanding performance across various modalities, owing to their simple but powerful scaled-dot-product (SDP) attention mechanisms.
Researchers have attempted to migrate Transformers to graph learning, but most advanced Graph Transformers (GTs) have strayed far from plain Transformers, exhibiting major architectural differences either by integrating message-passing or incorporating sophisticated attention mechanisms.
These divergences hinder the easy adoption of training advances for Transformers developed in other domains.
Contrary to previous GTs, this work demonstrates that the plain Transformer architecture can be a powerful graph learner.
To achieve this, we propose to incorporate three simple, minimal, and easy-to-implement modifications to the plain Transformer architecture to construct our Powerful Plain Graph Transformers (PPGT): 
(1) simplified $L_2$ attention for measuring the magnitude closeness among tokens; (2) adaptive root-mean-square normalization to preserve token magnitude information; and (3) a simple MLP-based stem for graph positional encoding.
Consistent with its theoretical expressivity, PPGT demonstrates noteworthy realized expressivity on the empirical graph expressivity benchmark, comparing favorably to more complicated alternatives such as subgraph GNNs and higher-order GNNs.
Its empirical performance across various graph datasets also justifies the effectiveness of PPGT.
This finding underscores the versatility of plain Transformer architectures and highlights their strong potential as a unified backbone for multimodal learning across language, vision, and graph domains.
\end{abstract}

\section{Introduction}

Transformers have achieved excellent performance across various domains, from language~\cite{vaswani2017AttentionAllYou, devlin2019BERTPretrainingDeep, brown2020LanguageModelsAre} 
to vision~\cite{dosovitskiy2021ImageWorth16x16, touvron2021TrainingDataEfficientImage, touvron2021GoingDeeperImage}, and are well known for their reduced dependency on inductive bias as well as stronger flexibility and generalizability~\cite{ dosovitskiy2021ImageWorth16x16}.
The recent success of transformer-based, multi-modal, large language models (LLMs)~\cite{ openai2024GPT4TechnicalReport, dubey2024Llama3Herd} 
has also brought the once-unattainable dream of building artificial general intelligence closer to reality.
However, these modalities are primarily data on Euclidean spaces,
while non-Euclidean domains are still under-explored.
Non-Euclidean domains like graphs are important for describing physical systems with different symmetry properties, from protein structure prediction~\cite{jumper2021HighlyAccurateProtein} to complex physics simulation~\cite{sanchez-gonzalez2018GraphNetworksLearnable}.
Therefore, many researchers have attempted to migrate Transformer architectures for graph learning for the future integration into multimodal foundation models.

Nevertheless, unlike other Euclidean domains,
the characteristics of graphs make the naive migration challenging.
Following the suboptimal performance of an early attempt~\cite{dwivedi2021GeneralizationTransformerNetworks},
researchers introduced significant architectural modifications to make Transformers perform well for graph learning. These include:
implicit/explicit message-passing mechanisms~\cite{kreuzer2021RethinkingGraphTransformers, rampasek2022RecipeGeneralPowerful}, edge-updating~\cite{kim2022PureTransformersAre, hussain2022GlobalSelfAttentionReplacement, ma2023GraphInductiveBiases}, and sophisticated non-SDP (scaled-dot-product) attention designs~\cite{chen2022StructureAwareTransformerGraph, ma2023GraphInductiveBiases}.
These complexities hinder the integration of advances for Transformers arising from other domains and impede progress towards potential future unification of multi-modalities.

In this work, we step back and rethink the difficulties that a plain Transformer architecture faces when processing graph data.
Instead of adding significant architectural modifications, we propose three minimalist but effective modifications to empower plain Transformers with advanced capabilities for learning on graphs (as shown in Fig.~\ref{fig:architcture}).
This results in our proposed \textbf{\emph{\underline{P}owerful \underline{P}lain \underline{G}raph \underline{T}ransformers}} (PPGTs), 
which have remarkable empirical and theoretical expressivity for distinguishing graph structures and superior empirical performance on real-world graph datasets.
These modifications can be incorporated without significantly changing the architecture of the plain Transformer. 
Thus, our proposal both retains the simplicity and generality of the Transformer and offers the potential of facilitating cross-modality unification.

\begin{figure}[t]
    \centering
    \subfigure[The Macro Archit.]{\includegraphics[height=6.8cm]{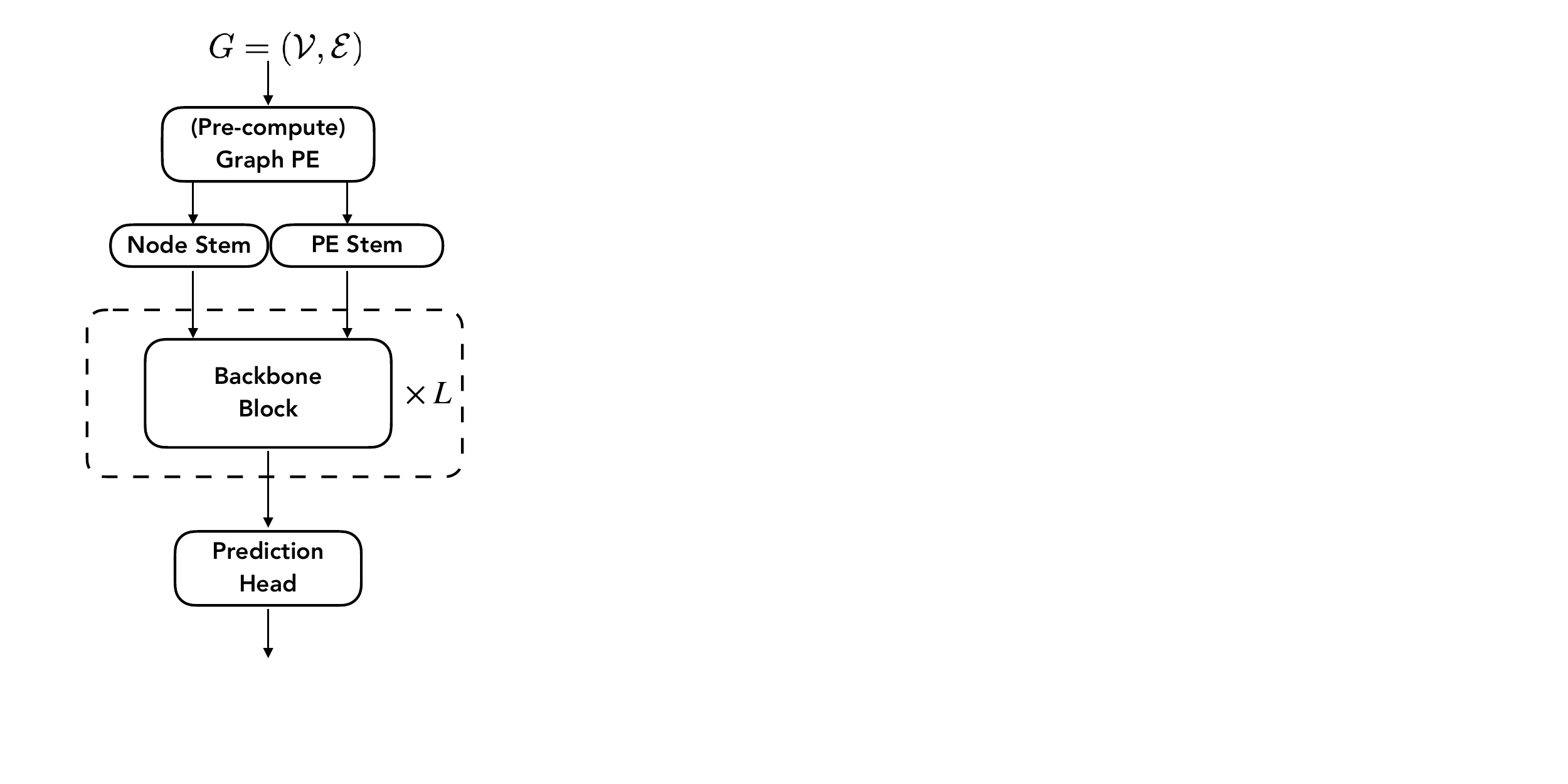}}
    \hspace{.5em}    
    \vrule width 1pt
    \hspace{.5em}    
    \subfigure[GraphGPS Block]{\includegraphics[height=6.8cm]{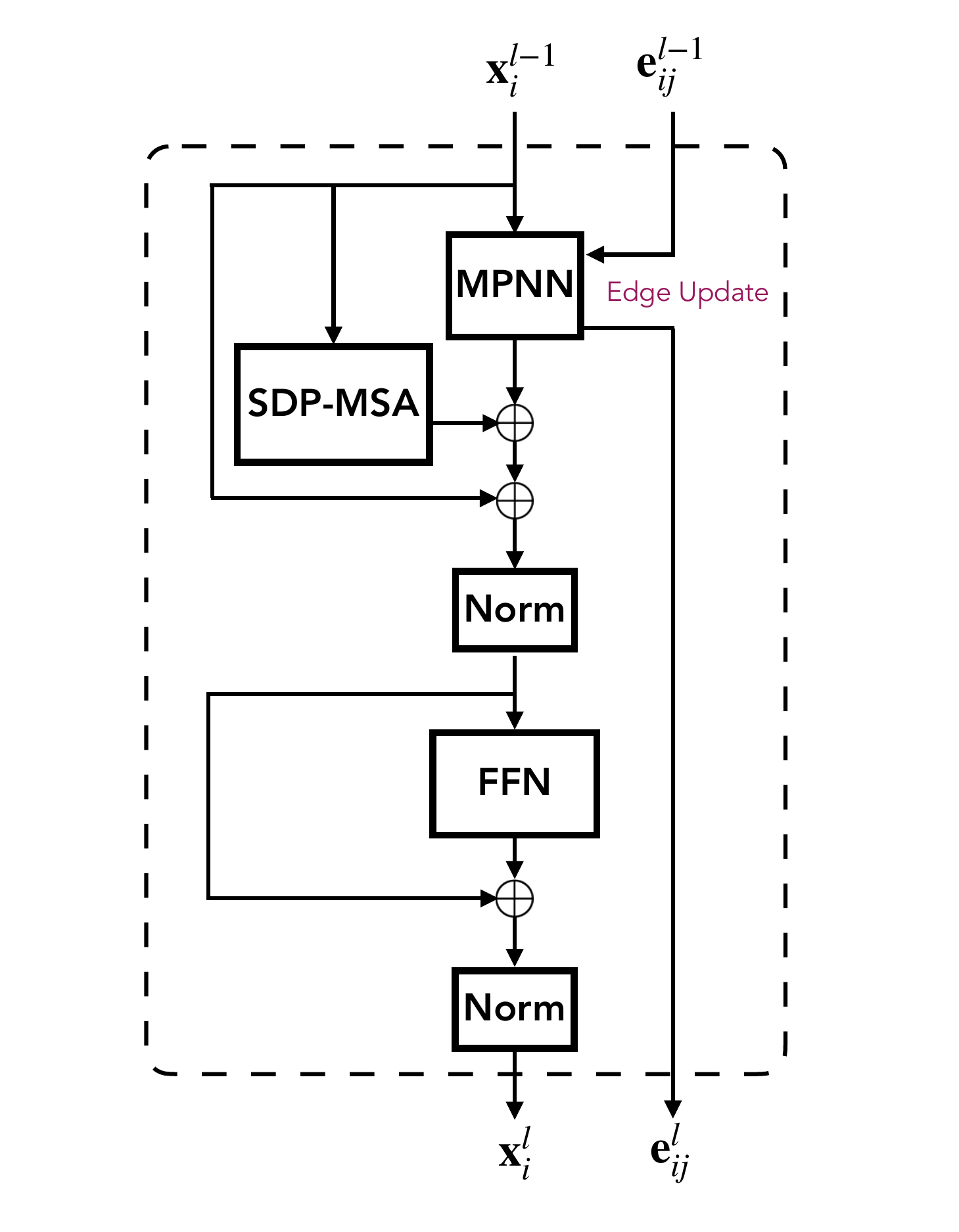}} 
    \hspace{1em}
    \subfigure[GRIT Block]{\includegraphics[height=6.8cm]{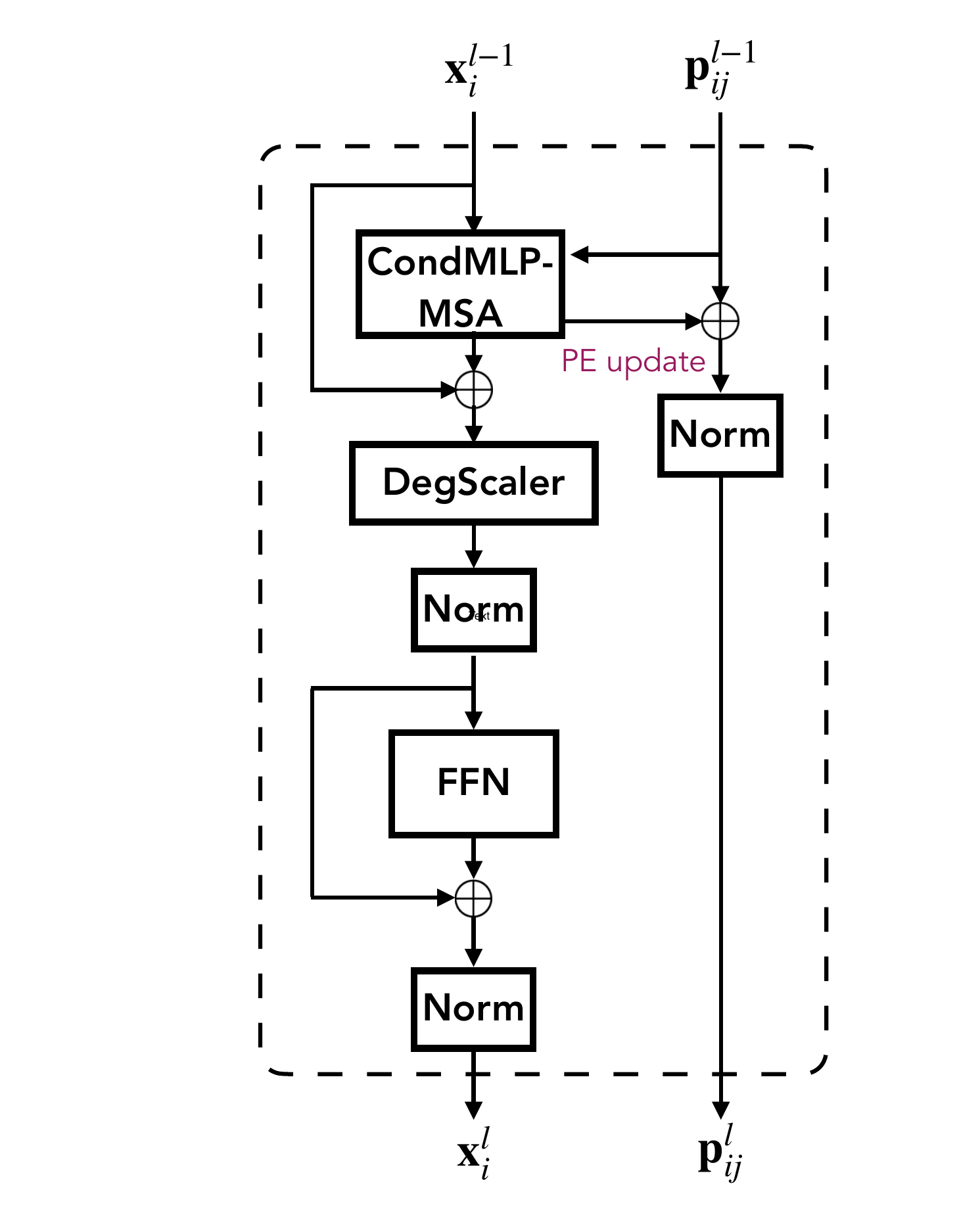}} 
    \hspace{1em}
    \subfigure[PPGT Block]{\includegraphics[height=6.8cm]{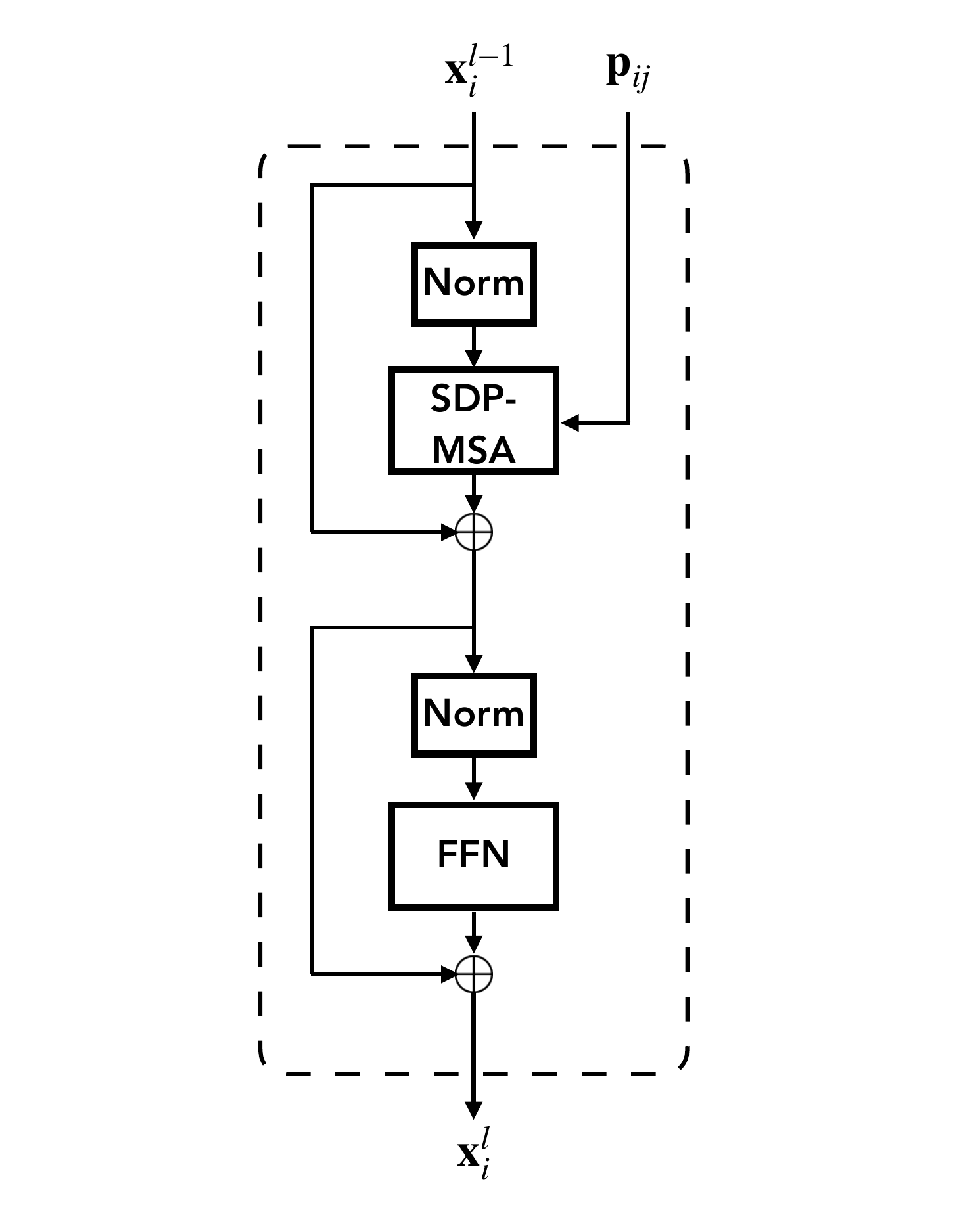}}
    \label{fig:sgt}
    \caption{(a) Graph Transformers usually consist of preprocessing blocks (i.e., stems), backbone blocks (i.e., Transformer layers), and task-specific output heads. (b) GraphGPS introduces a complicated hybrid architecture integrating MPNN layers with sparse edge updates. (c) GRIT is equipped with a complex attention mechanism (conditional MLPs) with PE update and degree-scaler. 
    On the other hand, (d) the proposed PPGT blocks simply follow the plain Transformer architecture, where s$L_2$ attention is implemented as SDP attention via float attention mask, and AdaRMSN is a direct substitute of RMSN.}
    \vspace{-1.em}
    \label{fig:architcture}
\end{figure}

\section{Preliminaries}
\label{sec:preliminary}

\subsection{Graph Learning}

\paragraph{Learning of Graphs and Encoding of Multisets}
Graphs are non-Euclidean geometric spaces with irregular structures and symmetry to permutation (i.e., invariant/equivariant).
A key factor for learning graphs is the ability to distinguish the distinct structures of the input graphs,
which is typically referred to as the expressivity of the graph model~\cite{xu2019HowPowerfulAre, zhang2023RethinkingExpressivePower}.
Currently, most graph neural networks (GNNs) are developed based on the framework of the Weisfeiler-Leman (WL) Isomorphism test~\cite{weisfeiler1968ReductionGraphCanonical} -- a 
color-refinement algorithm based on multisets $\mset{\cdot}$ encoding.
For example, message-passing networks (MPNNs) based on 1-WL~\cite{xu2019HowPowerfulAre},
distinguish graphs by encoding the neighborhood of each node as a multiset~\cite{xu2019HowPowerfulAre}. 
To go beyond 1-WL, researchers have extended to $K$-WL GNNs~\cite{morris2019WeisfeilerLemanGo}, $K$-Folklore-WL (FWL) GNNs~\cite{feng2023ExtendingDesignSpace}, and Generalized-distance-WL (GD-WL) GNNs~\cite{zhang2023RethinkingExpressivePower}, which are also based on such multiset encoding schemes, albeit with different multiset objects.
It is worth mentioning that, as discussed in \citet{xu2019HowPowerfulAre,zhang2023RethinkingExpressivePower}, the cardinalities of multisets are crucial for distinguishing multisets, e.g., $\mset{a,b}$ versus $\mset{a,a,b,b}$.
Generally, the cardinality is encoded into the token representation as its magnitude, e.g., encoding $\mset{a,b}$ and  $\mset{a,a,b,b}$ as $\bx$ and $c\cdot\bx$ respectively, for $c \neq 1 \in \RR^+$.
Thus, the \textbf{\emph{loss of token magnitude information weakens the ability to distinguish multisets and, consequently, graph structures}}.

\paragraph{Graph Transformers -- Learning Graphs with Pseudo-coordinates}

The philosophy of Graph Transformers (GTs) is to learn graph representations using positional encodings (PEs) rather than operating directly on the original input graphs. \citet{zhang2023RethinkingExpressivePower} provide a theoretical framework (GD-WL) that demonstrates that GTs' stronger expressivity stems from finer-grained information beyond 1-WL encoded in graph PEs.
\citet{zhang2024ExpressivePowerSpectral} demonstrate that GD-WL not only applies to distance-like or relative graph PEs but also enables analysis of absolute PEs (e.g., Laplacian PE) by transforming them into their relative counterparts.
Concurrently, \citet{ma2024CKGConvGeneralGraph} interpret graph PE as pseudo-coordinates in graph spaces that mimic manifolds, demonstrating that it can extend beyond graph distance and is not restricted to the Transformer architecture (e.g., convolutions on pseudo-coordinates).
\textbf{\emph{We follow the same philosophy and show that with a simple stem on pseudo-coordinates, even plain Transformers can achieve strong expressive power in graph learning.}
}

\subsection{Limitations in Plain Transformer Architectures}

\paragraph{The Loss of Magnitude Information in Token-wise Normalization Layer}
\label{sec:ln_rsmn}
\newcommand{\LN}{\text{LN}}
\newcommand{\RMSN}{\text{RMSN}}

LayerNorm (LN~\cite{ba2016LayerNormalization}) and Root-Mean-Square-Norm (RMSN~\cite{zhang2019RootMeanSquare}) are two widely used token-wise normalization techniques in Transformer-based models that effectively control token magnitudes:
\begin{equation}
\begin{aligned}
    &\text{LN}(\mathbf{x}) = \frac{\mathbf{x} - \frac{1}{D}\mathbf{1}^\intercal\bx}{\frac{1}{\sqrt{D}}\|\mathbf{x} -  \frac{1}{D}\mathbf{1}^\intercal\bx\|} \odot \boldsymbol{\gamma} + \boldsymbol{\beta}\,, \quad
    &\text{RMSN}(\mathbf{x}) = \frac{\mathbf{x}}{\frac{1}{\sqrt{D}}\|\mathbf{x}\|} \odot \boldsymbol{\gamma}\,.\label{eq:def_norm} \\
\end{aligned}
\end{equation}
Here $\mathbf{x} \in \mathbb{R}^D$ are token vectors; $\|\cdot\| \in [0, \infty)$ is the $L_2$-norm (i.e., magnitude) of a vector; and $\boldsymbol{\gamma}, \boldsymbol{\beta} \in \RR^D$ are parameters of a learnable affine transform.

They retract token representations onto a hypersphere, a property essential for dot-product attention mechanisms, echoing the notion of retraction as used in computational physics.
However, \textbf{\emph{both LN and RMSN are strictly invariant to changes in input magnitude}} (see Proposition~\ref{prop:ln_rmsn_magnitude} and the case study in Appx.~\ref{appx:case_study_norm}), which \textbf{\emph{can result in the loss of valuable token magnitude information}}

\paragraph{Pitfalls of Scaled Dot-product Attention}
\label{sec:sdp_attn}

With a good balance of capacity and efficiency,
Scaled dot-product (SDP) attention has become the most common attention mechanism in modern Transformers~\cite{vaswani2017AttentionAllYou}.
It has been widely explored in previous works and well optimized in deep learning libraries.

However, SDP attention is not perfect.
For query and key tokens $\mathbf{q}_i, \mathbf{k}_j \in \mathbb{R}^D$,
SDP attention is:
\begin{equation}
    \alpha_{ij} := \func{Softmax}_j(\hat{\alpha}_{ij}) = \frac{\exp(\hat{\alpha}_{ij})}{\sum_{j'} \exp(\hat{\alpha}_{ij'})}\,, 
    \text{where } 
    \hat{\alpha}_{ij} := \frac{\mathbf{q}_i^\intercal  \mathbf{k}_j}{\sqrt{D}}     
    = \frac{\cos(\mathbf{q}_i, \mathbf{k}_j) \cdot \|\mathbf{q}_i\| \cdot \| \mathbf{k}_j\|}{\sqrt{D}}\,.
\label{eq:sdp_attn}
\end{equation}
Here $\cos(\mathbf{q}_i, \mathbf{k}_j) \in [-1, 1]$ is the cosine similarity, measuring the angle between $\mathbf{q}_i$ and $\mathbf{k}_j$, independent of the vector magnitudes.

\begin{figure}[h!]
    \vspace{-2em}
    \centering
    \subfigure[]{\includegraphics[height=4.5cm]{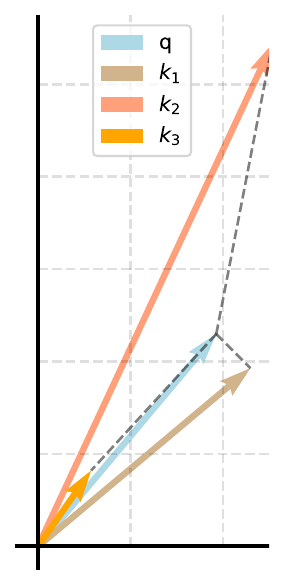}} 
    \subfigure[]{\includegraphics[height=4.5cm]{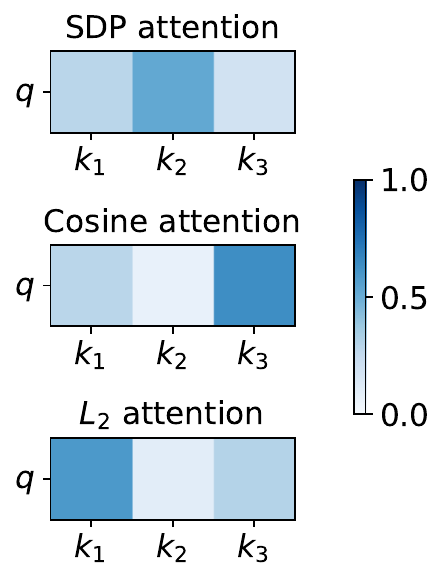}} 
    
    \caption{Illustration for comparing different attention mechanisms: (b) visualization of attention scores. SDP attention is biased towards larger-magnitude $k_2$. Cos attention disregards the magnitude information. $L_2$ attention strikes a balance between SDP and Cos attention to attend to $k_1$, which has the lowest $L_2$ distance to the query $q$.}
    \label{fig:qk_attn}
\end{figure}

Three drawbacks of SDP attention necessitate the additional control of token magnitudes:
\begin{enumerate}[noitemsep, topsep=0em,leftmargin=*]
    \item \textbf{\emph{Softmax saturation}}: large token magnitudes (i.e., $\|\mathbf{q}_i\|$, $\|\mathbf{k}_j\|$) lead to large pre-softmax-logit values $\hat{\alpha}_{ij}$ and consequently extremely small gradients in backpropagation~\cite{vaswani2017AttentionAllYou}.
    \item \textbf{\emph{No closeness measurement on magnitude}}:
   for each query, $\|\mathbf{q}_i\|$ degenerates to a temperature factor $\tau$, and only controls the sharpness of attention scores, irrespective of the closeness between $\|\mathbf{q}_i\|$ and $\|\mathbf{k}_j\|$ (Fig. ~\ref{fig:qk_attn}).
    \item \textbf{\emph{Biased to large magnitude keys}}:
    with $\|\mathbf{q}_i\|$ as the temperature $\tau$, compared to $\cos(\mathbf{q}_i, \mathbf{k}_j) \in [-1, 1]$,
    large $\|\mathbf{k}_j\| \in [0, +\infty)$ will dominate attention scores.
\end{enumerate}

To mitigate the first and third drawbacks, \textbf{\emph{existing plain Transformers heavily rely on token-wise normalization to regulate the token magnitudes}}.
Removing LN/RMSN and/or using BatchNorm(BN)~\cite{ioffe2015BatchNormalizationAccelerating} instead can induce 
training instability and divergence~\cite{touvron2021GoingDeeperImage, yao2021LeveragingBatchNormalization}.

\section{Method}

As established in Section~\ref{sec:preliminary}, plain Transformers exhibit suboptimal performance in graph learning, due to the architectural limitations that hinder preserving and modeling token magnitude information --- a critical aspect of capturing graph structures.
Consequently, previous GTs attempt to address it by adopting BN and overly sophisticated attention designs (e.g., MPNNs and/or MLP-based attention)~\cite{kreuzer2021RethinkingGraphTransformers, rampasek2022RecipeGeneralPowerful, ma2023GraphInductiveBiases}.
They have thus strayed away from plain Transformers, impeding the transfer of previously explored training advances and the potential unification of other foundation models.

In this work, we propose two minimal and easy-to-adapt modifications to directly address the aforementioned limitations in plain Transformers. 
Furthermore, we introduce an extra enhancement in the PE stem to boost the information extraction of PE.
These enhancements enable plain Transformers to achieve stronger expressive power for graph learning.


\subsection{AdaRMSN}

As discussed above, 
SDP attention mechanisms necessitate additional control on token magnitudes.
However, existing token-wise normalization layers lead to irreversible information loss on magnitudes.
We desire a normalization layer that can not only control the token magnitudes,
but is also capable of preserving the magnitude information when necessary. 

\newcommand{\rms}{\text{RMS}}
Inspired by adaptive normalization layers~\cite{dumoulin2017LearnedRepresentationArtistic, devries2017ModulatingEarlyVisual, peebles2023ScalableDiffusionModels},
we propose adaptive RMSN (AdaRMSN)
\begin{align}
    \text{AdaRMSN}(\mathbf{x}) = \frac{\mathbf{x}}{\frac{1}{\sqrt{D}}\|\mathbf{x}\|} \cdot \gamma'(\bx) \,,
     \quad \text{where } \gamma'(\bx):= \frac{1}{\sqrt{D}} \|\boldsymbol{\alpha} \cdot \mathbf{x} + \boldsymbol{\beta}\| \,.
\end{align}
The parameter
$\boldsymbol{\beta} \in \mathbb{R}^D$ is initialized as $\mathbf{1}$ and $\boldsymbol{\alpha} \in \mathbb{R}^D$ as $\mathbf{0}$, leading to $\gamma'(\bx)=1$.
AdaRMSN behaves the same as regular RMSN at the initial stage of training, but is capable of recovering the identity transformation with $\boldsymbol{\beta}=\mathbf{0}$ and $\boldsymbol{\alpha}=\mathbf{1}$ when necessary.

\subsection{Simplified \texorpdfstring{$L_2$}{L2} Attention Mechanisms}
\paragraph{From Dot-product to Euclidean Distance}

To empower SDP attention to sense both angle- and magnitude-information among query and key tokens,
we revisit $L_2$ attention~\cite{kim2021LipschitzConstantSelfAttention}, which is based on Euclidean distance and is capable of measuring token closeness by balancing both angles and magnitudes, in contrast to SDP attention and cosine-similarity attention (as shown in Fig.~\ref{fig:qk_attn}).

To achieve better alignment with SDP attention, we further simplify it and reformulate it as SDP-attention with an additional bias term:
\begin{align}
     \alpha_{ij} &:= \Softmax_j(-\frac{1}{\sqrt{D}} \cdot \frac{1}{2} \|\mathbf{q}_i - \mathbf{k}_j\|_2^2)  
     = \Softmax_j (\frac{1}{\sqrt{D}} (\mathbf{q}_i^\intercal \mathbf{k}_j - \frac{1}{2} \mathbf{q}_i^\intercal \mathbf{q}_i - \frac{1}{2} \mathbf{k}_j^\intercal \mathbf{k}_j))  \nonumber \\
     &= \Softmax_j (\frac{1}{\sqrt{D}} \mathbf{q}_i^\intercal \mathbf{k}_j - \frac{1}{2\sqrt{D}} \mathbf{k}_j^\intercal \mathbf{k}_j)\,.    
     \label{eq:sim_l2_attn} 
\end{align}
This can be easily supported by existing SDP attention implementations in most deep learning libraries, e.g., PyTorch~\cite{paszke2019ImperativeStyleHigh}~\footnote{e.g., the bias term can be directly formulated as a float attention mask in SDP-attention in PyTorch.}.

We denote this attention mechanism as \emph{\textbf{simplified $L_2$}}~(s$L_2$) \emph{\textbf{attention}}.

\paragraph{s$L_2$ Attention with PE and Universality Enhancement}

For effective learning of objects with structure, we need to inject positional encoding (PE) to enable the attention mechanism to sense the structure.
Following previous work~\cite{zhang2024ExpressivePowerSpectral},
we describe s$L_2$ attention using the relative-form of PE.

Let $\mathbf{p}_{ij}$ denote the relative positional embeddings for node-pair $(i,j)$ shared by all attention blocks, which is potentially processed by the stems (preprocessing modules).
For each head, the attention scores $\alpha_{ij} \in \RR$ for query/key tokens $\mathbf{q}_i, \mathbf{k}_j \in \mathbb{R}^D$ in the proposed attention are computed as
\begin{align}
     \alpha_{ij} &:=
       \phi(\mathbf{p}_{ij})\cdot\Softmax_j \big(\frac{\mathbf{q}_i^\intercal \mathbf{k}_j}{\sqrt{D}}  - \frac{\mathbf{k}_j^\intercal \mathbf{k}_j}{2\sqrt{D}}  + \theta(\mathbf{p}_{ij})\big)  \,,  
       \label{eq:sL2_urpe}
\end{align}
where $\phi: \mathbb{R}^D \to \mathbb{R}$ and $\theta: \mathbb{R}^D \to \mathbb{R}$ are linear transforms.
The $\phi(\mathbf{\mathbf{p}_{ij}})$ is an optional term purely based on the relative position to guarantee the \emph{universality of attention with relative PE}~(URPE)~\cite{luo2022YourTransformerMay}.
This term demands slight customization of the existing attention implementation,
but we retain it since it is beneficial for learning objects with complicated structures~\cite{luo2022YourTransformerMay, zhang2023RethinkingExpressivePower}.
Notably, this form can be viewed as Continuous Kernel Graph Convolution~\cite{ma2024CKGConvGeneralGraph} with a dynamic density function.

Note that, the attention with PE in relative-form is general,
 since it is widely employed in many existing Transformers, from language~\cite{shaw2018SelfAttentionRelativePosition, raffel2020ExploringLimitsTransfer, press2022TrainShortTest} to vision~\cite{dosovitskiy2021ImageWorth16x16,liu2021SwinTransformerHierarchical,liu2022SwinTransformerV2}.
Many absolute PEs are \textit{de facto} explicitly/implicitly transformed into the relative-form in attention mechanisms~\cite{su2024RoFormerEnhancedTransformer, huang2024StabilityExpressivePositional, zhang2024ExpressivePowerSpectral}.

\subsection{Powerful Plain Graph Transformers}

In this section, we provide an example of constructing plain Transformers for learning graphs with our proposed techniques, termed Powerful Plain Graph Transformers (PPGT).

We follow the plain Vision Transformers architecture~\cite{dosovitskiy2021ImageWorth16x16, touvron2021TrainingDataEfficientImage}, 
--- stems, backbone and prediction head (as shown in Fig.~\ref{fig:architcture}(a) and Fig.~\ref{fig:architcture}(d)).

\paragraph{Graph Positional Encoding}

In this work, we utilize relative random walk probabilities (RRWP)~\cite{ma2023GraphInductiveBiases} as our demonstrating example of graph PE,
considering its simplicity and effectiveness.
RRWP is defined as 
\begin{align}
    \p'_{ij} = [\mathbf{I}, \rw, \rw^2, \dots, \rw^{K-1}]_{[i,j]} \in \mathbb{R}^K\,,
\end{align}
where $\mathbf{X}_{[i,j]}$ stands for the $i,j$th element/slice of a tensor $\mathbf{X}$; 
$\rw := \D^{-1}\A$ is the random walk matrix given the adjacency matrix $\A$ of the graph; and $\mathbf{I} \in \RR^{N \times N}$ denotes the identity matrix. 
Most other graph PEs are applicable in our framework with minor modifications to the PE stem.


\subparagraph{Transformer Backbone}

Following the latest plain Transformers, we utilize the pre-norm~\cite{xiong2020LayerNormalizationTransformer} architecture, for $l=1, \cdots, L$,
\begin{align}
\hat{\bX}^l = \bX^{l-1}+\func{MSA}(\func{Norm}(\bX^{l-1}), \bP), \quad 
\bX^l = \func{FFN}(\bX^l):=\hat{\bX}^{l}+\func{MLP}(\func{Norm}(\hat{\bX}^{l})), \quad
\bY = \func{Norm}(\bX^L)
\label{eq:ffn}
\end{align}
where $\bX^l=[\bx_i^l]_{i=1}^N\in \RR^{N \times D}$ contains the node representations at layer $l$; 
$\bP=[[\bp_{ij}]_{i=1}^N]_{j=1}^N \in \RR^{N \times N \times D}$ contains the relative positional embeddings;
$\func{MSA}$ denotes multihead self-attention;  $\func{Norm}$ indicates the normalization layer; $\func{MLP}$ is a 2-layer multilayer perception and $\func{FFN}$ is a Feedfoward networks with pre-norm.

\paragraph{Stem}
Before the Transformer backbone, we use small networks, usually referred to as stems, to process positional encoding $\p'_{ij}$ and merge node/edge attributes $\bx'_i$/$\be'_{ij}$.

We consider a simple stem design for node and PE, respectively: 
\begin{align}
    \bx^{0}_i = \func{FC}(\bx'_i) + \func{FC}(\p'_{ii}), \quad 
    \p^0_{ij} = \func{Norm}\circ \func{FFN}\circ \cdots \circ \func{FFN}(\func{FC}(\be'_{ij}) + \func{FC}(\p'_{ij})), \label{eq:pe_stem}
\end{align}
where $\func{FC}$ stands for the fully-connected layer (e.g., linear projection);
$\circ$ stands for function composition;
$\bx'_i$ is treated as zero if there are no node attributes; $\be'_{ij}$ is set to zero if there are no edge attributes or if $(i,j)$ is not an observed edge.
Driven by the analysis of RRWP in \citet{ma2023GraphInductiveBiases}, we introduce additional FFNs and a final normalization layer in the PE stem to better extract the structural information, mimicking the pre-norm architecture of the Transformer backbone. 

\paragraph{Prediction Head}
Unless otherwise specified, we employ a task-specific MLP prediction head, following the designs of GRIT~\cite{ma2023GraphInductiveBiases} and GraphGPS~\cite{rampasek2022RecipeGeneralPowerful} -- for graph-level tasks, we apply sum or mean pooling followed by an MLP; for node-level tasks, we use an MLP shared across all nodes. 
For OGBN-ArXiv, we utilize a class-attention prediction head inspired by CaiT~\cite{touvron2021GoingDeeperImage}, which is better suited for the graph-sampling strategy.

\subsection{Anti-Spectral-Bias with Sinusoidal PE Enhancement (SPE)}
\label{sec:sin_pe_enc}

\begin{figure}[h!]
        \centering
        \vskip -.5cm
        \subfigure[]{\includegraphics[width=0.22\textwidth]{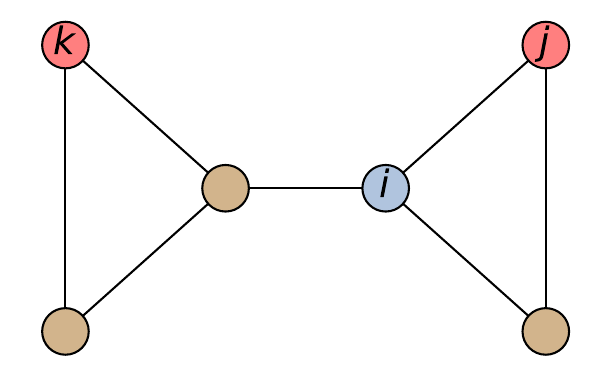}}
        \subfigure[]{\includegraphics[width=0.1925\textwidth]{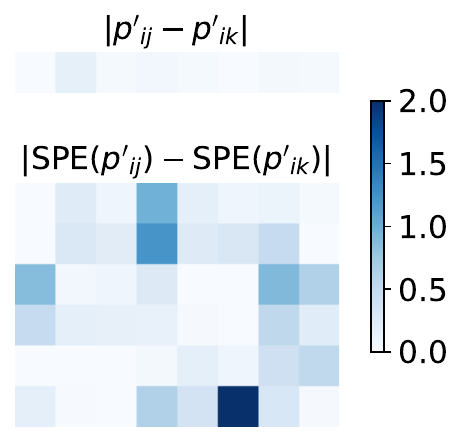}} 
        \caption{Illustration of two node-pairs (a) $(i,j)$ and $(i,k)$ of a graph, and (b) absolute difference of RRWPs and sinusoidally-encoded RRWPs for those two node-pairs. }
        \label{fig:sin_pe}
\end{figure}

As discussed in \citet{zhang2023RethinkingExpressivePower,zhang2024ExpressivePowerSpectral},
as the information provided through graph positional encodings (PEs) becomes more fine-grained, GD-GNNs can achieve better distinguishability of graph structure.
However, due to the spectral bias of neural networks~\cite{rahaman2019SpectralBiasNeural},
MLPs prioritize learning the low-frequency modes and thus lose detailed information stored in PEs.

\newcommand{\pijd}{p'_{ijk}}

To mitigate this issue, motivated by NeRF~\cite{mildenhall2020NeRFRepresentingScenes},
we add an extra sinusoidal encoding on top of the RRWP and process it with a simple MLP. 
The sinusoidal encoding is applied to each channel of $\p'_{ij}$ in an elementwise fashion:
\begin{align}
 \func{SPE}(\pijd) &= \big[\pijd, \sin(2^0\pi \pijd),\ \cos(2^0\pi \pijd), \dots,\quad \sin\big(2^{S - 1}\pi \pijd\big),\ \cos\big(2^{S - 1}\pi \pijd \big) \big]\,,
\end{align}
where $\func{SPE}: \RR \to \RR^{1+2S}$ with $S \in \mathbb{Z}^+$ different bases.
For notational conciseness, we use $\func{SPE}(\p'_{ij}) \in \mathbb{R}^{K + 2SK}$ to denote the application of $\func{SPE}$ to all channels of $\p'_{ij}$ and concatenate the outputs. 
The PE stem (Eq.~\eqref{eq:pe_stem}) becomes
\begin{equation}
    \p^0_{ij} = \func{Norm}\circ \func{FFN} \circ \cdots \circ \func{FFN}(\func{FC}(\be'_{ij}) + \func{MLP}(\func{SPE}(\p'_{ij}))) 
\end{equation}
As shown in Fig.~\ref{fig:sin_pe}, after sinusoidal encoding, the signal differences between $\p'_{ij}$ and $\p'_{ik}$ are amplified.

\subsection{Theoretical Expressivity of PPGT}

The theoretical expressivity of PPGT can be analyzed within the GD-WL framework~\cite{zhang2023RethinkingExpressivePower}.  
When equipped with regular generalized distances (e.g., RRWP, resistance distance~\cite{zhang2023RethinkingExpressivePower}) as graph PE, PPGT attains expressivity strictly beyond $1$-WL and is upper bounded by $3$-WL, matching the theoretical limits of a broad class of GD-WL algorithms.  
We provide the formal proof in Appx.~\ref{appx:expressivity}.

\section{Relationship with Previous Work}

\textbf{Graph Transformers.}
Transformers, especially \emph{plain Transformers}—architectures close to the vanilla Transformer~\cite{vaswani2017AttentionAllYou} with scaled dot-product (SDP) attention and feed-forward networks (FFNs)—have achieved outstanding performance across a wide range of domains, from language~\cite{devlin2019BERTPretrainingDeep, raffel2020ExploringLimitsTransfer, openai2024GPT4TechnicalReport, dubey2024Llama3Herd} to vision~\cite{dosovitskiy2021ImageWorth16x16, touvron2021GoingDeeperImage, caron2021EmergingPropertiesSelfsupervised, oquab2024Dinov2LearningRobust}.

Motivated by the success of Transformers in other domains, 
researchers have strived to migrate Transformers to graph learning to address the limitations of MPNNs.
Although the naive migration of Transformers to graph learning did not work well~\cite{dwivedi2021GeneralizationTransformerNetworks},
several recent works have achieved considerable success when constructing graph Transformers, from theoretical expressivity analysis~\cite{zhang2023RethinkingExpressivePower, zhang2024ExpressivePowerSpectral} to impressive empirical performance~\cite{ying2021TransformersReallyPerform, rampasek2022RecipeGeneralPowerful, ma2023GraphInductiveBiases}.
\textbf{\emph{However, during the development of these graph Transformers, there has been a gradual but substantial deviation from the plain Transformers widely used in other domains.}}
For example, \emph{\underline{SAN}}~\cite{kreuzer2021RethinkingGraphTransformers} introduces dual-attention mechanisms with local and global aggregations;
\emph{\underline{K-Subgraph SAT}}~\cite{chen2022StructureAwareTransformerGraph} introduces MPNNs into attention mechanisms to compute attention scores;
\emph{\underline{GraphGPS}}~\cite{rampasek2022RecipeGeneralPowerful} heavily relies on the MPNNs within its hybrid Transformer architecture;
\emph{\underline{EGT}}~\cite{hussain2022GlobalSelfAttentionReplacement} introduces gated mechanisms and edge-updates inside the attention; 
\emph{\underline{GRIT}}~\cite{ma2023GraphInductiveBiases}  incorporates a complicated conditional MLP-based attention mechanism (shown in Appendix.~\ref{appx:grit}).
These deviations prevent the easy adoption of Transformer training advances and obscure the potential unification of cross-modality foundation models.
Among these graph Transformers, the \emph{\underline{graphormer}}-series~\cite{ying2021TransformersReallyPerform, luo2022YourTransformerMay, zhang2023RethinkingExpressivePower} \emph{\textbf{retain architectures that are closest to plain Transformers; unfortunately, the result is a substantial gap in empirical performance and empirical expressivity compared to the best-performing graph Transformers}}~\cite{rampasek2022RecipeGeneralPowerful, ma2023GraphInductiveBiases}.
Most graph Transformers treat a node as a token. TokenGT \cite{kim2022PureTransformersAre} views both nodes and edges as tokens and processes them using plain or sparse Transformers. However, despite adopting a plain Transformer architecture, TokenGT also falls considerably behind recent graph Transformers on the expressivity-demanding PCQM4Mv2 dataset.

Developing powerful graph Transformers based on a plain Transformer architecture is particularly attractive, as the associated hardware stacks and software libraries have already been extensively optimized.
Therefore, we investigate the fundamental limitations of plain Transformers on graph-structured data and introduce several simple yet effective architectural enhancements that preserve the core plain Transformer design, enabling competitive empirical performance without requiring significant architectural modifications.

\textbf{Other Attention Mechanisms.}
Besides SDP attention, there are other attention variants in use.
\citet{bahdanau2015NeuralMachineTranslation} introduced the earliest content-based attention mechanism for recurrent neural networks (RNNs) based on an MLP, which is more computationally and memory costly. Swin-Transformer-V2~\cite{liu2022SwinTransformerV2} uses cosine-similarity to compute attention scores for better stability, but neglects the magnitude information.
\citet{kim2021LipschitzConstantSelfAttention} propose the use of the negative of the square $L_2$-distance, with tied query-key projection weights, for maintaining Lipchitz continuity of Transformers.
Our attention mechanism, although based on $L_2$ attention, is further simplified and adjusted in order to maintain alignment with SDP attention.

\textbf{Continuous Kernel Graph Convolution.}
\citet{ma2024CKGConvGeneralGraph} introduce graph convolution operators with continuous kernels defined over pseudo-coordinates of graphs, termed CKGConv. 
These operators offer better flexibility in capturing high-frequency information compared to attention-based mechanisms.
PPGT with URPE enhancement can also be interpreted as a generalization of CKGConv. 
In Eq.~\eqref{eq:sL2_urpe}, the $\phi(\mathbf{p}_{ij})$ term serves as the convolution kernel, analogous to that in CKGConv, 
while the $\Softmax_j \big(\frac{\mathbf{q}_i^\intercal \mathbf{k}_j}{\sqrt{D}}  - \frac{\mathbf{k}_j^\intercal \mathbf{k}_j}{2\sqrt{D}}  + \theta(\mathbf{p}_{ij})\big)$ component can be regarded as a dynamic density function conditioned on the token representations, which is assumed to be uniform in CKGConv.

\textbf{Universality of Transformers}
The Universal Approximation Theorem is fundamental for understanding the theoretical representational capacity upper bound of neural networks.
While \citet{hornik1989MultilayerFeedforwardNetworks} established that MLPs are universal approximators for functions $f:\mathbb{R}^N \to \mathbb{R}^M$ on compact sets, this conclusion does not automatically extend to other domains.
\citet{yun2020AreTransformersUniversal} demonstrated that Transformers with absolute Positional Encodings (PE)—which sufficiently distinguish token order—are universal approximators for sequence-to-sequence functions. 
Conversely, \citet{luo2022YourTransformerMay} showed that Transformers with relative PE lack this universality.
However, they show that it can be restored through the universality enhancement.
Regarding graph domains, \citet{kreuzer2021RethinkingGraphTransformers} state that Transformers with absolute PE can approximate any function $f$ for the graph isomorphism problem.
Crucially, however, this requires that the absolute PE can uniquely identify nodes in each graph, which is infeasible due to the highly symmetric structure of graphs.

Graph Transformers with full attention and appropriate positional encoding (PE) are typically considered at most as expressive as the 3-WL test for graph isomorphism~\cite{zhang2023RethinkingExpressivePower}. In contrast, those using linear attention have expressivity equivalent to MPNNs with virtual nodes~\cite{cai2023ConnectionMPNNGraph}.

\textbf{Additional related work.}
We discuss additional related work, including MPNNs, graph positional/structural encoding, higher-order GNNs, and subgraph GNNs in Appx.~\ref{appx:related}.

\newcommand{\sgt}{PPGT}
\newcommand{\sgtfull}{Powerful Plain Graph Transformer}

\section{Experimental Results}
\label{sec:exp}

\subsection{Empirical Expressivity on Graph Isomorphism}
\label{sec:brec}

To better understand the expressivity of PPGT,  
we evaluate our model on the BREC benchmark~\cite{wang2024EmpiricalStudyRealized},
a comprehensive dataset for measuring the empirical expressive power of GNNs w.r.t. graph isomorphism, with graph-pairs from 1-WL to 4-WL-indistinguishable.

From the results (as shown in Tab.~\ref{tab:brec}), we can uncover several conclusions and insights: \\
\textbf{[1]. GTs can reach empirical expressivity approaching the theoretical expressivity:}
With proper architectural designs and graph PE, most GTs achieve decent expressivity bounded by 3-WL, matching 3-WL equivalence on the Basic, Regular, and Extended categories of graph pairs. 
Graphormer, as an example of earlier plain Transformers, demonstrates inferior expressivity, \emph{highlighting the necessity of improving plain Transformers for graph structure learning}.
PPGT, while maintaining a plain Transformer architecture, achieves powerful empirical expressivity through our proposed modifications, surpassing other GTs with more sophisticated architectures.
\\
\textbf{[2]. Mismatch between theoretical and empirical expressivity:}
The theoretical expressivity is not completely reflected in the empirical expressivity. 
For example, despite the same theoretical expressivity, adding SPE---which enhances the information extraction from PE---to PPGT can significantly boost the empirical expressivity, distinguishing 24 pairs of graphs in CFI (improved from 8 pairs).
On the other hand, EPNN and PPGN, despite having stronger theoretical expressivity, achieve worse empirical expressivity compared to PPGT.
This indicates that \emph{besides theoretical expressivity, whether GNNs can effectively learn to fulfill their theoretical expressivity also matters}. \\
\textbf{[3]. Mismatch between expressivity and real-world benchmark performance:}
The stronger theoretical/empirical expressivity is not completely reflected in real-world benchmark performance. 
For example, subgraph GNNs and/or K-WL GNNs with stronger expressivity (e.g., SSWL+, I$^2$GNN, N$^2$GNN) demonstrate inferior performance compared to GRIT and PPGT on the ZINC benchmark. \\ 
\textbf{[4]. Going beyond GD-WL?:}
As previously discussed, the design of graph PE, a.k.a., pseudo-coordinates, can be extended beyond the distance/affinity of graphs.
With this in mind, we conduct an exploratory demo called I$^2$GNN+PPGT, which uses I$^2$GNN to generate additional positional encodings for PPGT.
The empirical expressivity is further improved to 76\%, outperforming the standalone I$^2$GNN and PPGT, surpassing 3-WL and reaching the top performance among the methods compared.
This demonstration hints that \emph{plain Graph Transformers can potentially surpass GD-WL and achieve greater expressive power purely through enhanced positional encoding designs.}
This result is noteworthy, as PPGT achieves superior empirical performance compared to many subgraph GNNs and higher-order GNNs that possess greater theoretical expressivity.

\textbf{The expressivity bottleneck of our PPGT model does not stem from its architecture, but rather from the design of positional encodings (PE). To fully realize the expressive potential of plain Graph Transformers, it is crucial to develop expressive and generalizable PE schemes for graphs that are also compatible with permutation symmetry.}


\begin{table*}[h!]
\caption{Theoretical Expressivity (WL-Class) vs. Empirical Expressivity (\emph{BREC}) vs. Empirical Performance (\emph{ZINC-12K}). Notations on expressivity follow previous works~\cite{morris2020WeisfeilerLemanGo, zhang2024ExpressivePowerSpectral} that $\equiv$: equivalent; A $\sqsupset$ ($\sqsupseteq$) B: A is bounded by (or equivalent to) B; A $\not\sqsupset$ B: A is not bouned by B. \underline{$\cdot$} for unnamed WL-class. $(k-1)$-FWL $\equiv$ $k$-WL, for $k > 2$.
}
\vskip 0.0in
\label{tab:brec}
\resizebox{1.0\textwidth}{!}{

\begin{tabular}{cccccccccc}
    \toprule
    ~ & ~ & ~ & \multicolumn{1}{c}{Basic (60)} & \multicolumn{1}{c}{Reg.(140)} & \multicolumn{1}{c}{Ext. (100)} & \multicolumn{1}{c}{CFI (100)} & \multicolumn{2}{c}{\textbf{Total} (400)} & \textbf{ZINC} \\
     \cmidrule(lr){4-4} \cmidrule(lr){5-5} \cmidrule(lr){6-6} \cmidrule(lr){7-7}  \cmidrule(lr){8-9}  \cmidrule(lr){10-10}
    Type & Model  & \underline{WL-Class} &  Num.($\uparrow$) & Num.($\uparrow$) & Num.($\uparrow$) & Num.($\uparrow$) & Num.($\uparrow$) & Acc. ($\uparrow$) & MAE ($\downarrow$) \\
    \midrule
    \multirow{2}{*}{\makecell{Heuristic\\Algorithm}}
    & 1-WL & \underline{1-WL} & 0 &  0  & 0 &  {0}  & {0} & {0\%} & - \\
    & 3-WL & \underline{3-WL} & 60 &  50  & 100 &  {60}  & {270} & {67.5\%} & - \\
    \midrule
    \multirow{3}{*}{\makecell{Subgraph\\GNNs}}
    & SUN & \underline{SWL} $\sqsupset$ 3-WL & 60  & 50 & 100 & 13 & 223 & 55.8\% &  0.083 \\
    & SSWL+  & SWL $\sqsupset$ \underline{SSWL} $\sqsupset$3-WL & 60 &  50 &  100 &  38 &  248 & 62\% & 0.070 \\
    & I$^2$GNN &  \underline{$\cdot$} $\not\sqsupseteq$3-WL & 60 & 100 &  100 &  21 &  {281} & 70.2\% & 0.083\\
    \midrule
    \multirow{3}{*}{\makecell{K-WL\\GNNs}}
    & PPGN  & \underline{3-WL} & 60 &  50  & 100 & 23  & 233 & 58.2\% & - \\
    &2-DRFWL(2) & \underline{$\cdot$} $\sqsupset$ 2-FWL   & 60  &  50   & 99 &  0 & 209 & 52.25 \%  & 0.077 \\
    & 3-DRFWL(2) & \underline{$\cdot$} $\sqsupset$ 2-FWL & 60 &  50   & 100 &   13 &   223 & 55.75 \%  & - \\
       & N$^2$GNN & $2$-FWL $\sqsupseteq$ \underline{$2$-FWL+} $\sqsupset$ 3-FWL & 60 & 100   & 100 &  27 &  287  & 71.8\% & 0.059 \\
    \midrule
    
    & Graphormer & \underline{GD-WL} $\sqsupset$ 3-WL & 16 & 12 & 41  & 10  & 79 & 19.8\%  & 0.122 \\
       & EPNN &  GD-WL $\sqsupseteq$ \underline{EPWL} $\sqsupset$ 3-WL  & 60  & 50 &  100 &  5 &  215 & 53.8\% & - \\
    & CKGConv & \underline{GD-WL} $\sqsupset$ 3-WL & 60 & 50  & 100 & 8 & 218 & 54.5\% & 0.059 \\
\multirow{2}{*}{\makecell{GD-WL\\GNNs}}   & GRIT &  \underline{GD-WL} $\sqsupset$ 3-WL & 60 & 50  & 100 & 8 & 218 & 54.5\% &  0.059\\
    \cmidrule(lr){2-10}
      & PPGT w/o SPE & \underline{GD-WL} $\sqsupset$ 3-WL & 60 & 50  & 100 & 8 & 234 & 54.5\%  & - \\
    & PPGT  & \underline{GD-WL} $\sqsupset$ 3-WL & 60 & 50  & 100 & 24 & 234 & 58.5\%  & 0.057 \\
    & I$^2$GNN+PPGT  & \underline{GD++-WL}& 60 & 120  & 100 & 24 & 304 & 76\%  & - \\
    \bottomrule
\end{tabular}

}
\vskip -0.1in
\end{table*}

\subsection{Benchmarking PPGT on Real-world Benchmarks}

The specifications, details, and references for the baseline methods are provided in Appx.~\ref{appx:baseline_info}.

\paragraph{Benchmarking GNNs}
We conduct a general evaluation of our proposed \sgt\ on  
five datasets from \emph{Benchmarking GNNs}~\citep{dwivedi2022BenchmarkingGraphNeural}: ZINC, MNIST, CIFAR10, PATTERN, and CLUSTER, and summarize the results in Table~\ref{tab:exp_main}.
We observe that our model obtains the best mean performance for all five datasets, outperforming various MPNNs, non-MPNN GNNs, and existing graph Transformers.
These results showcase the effectiveness of \sgt\ for general graph learning with a plain Transformer architecture.

\newcommand{\first}[1]{\textcolor{SeaGreen}{\textbf{#1}}}
\newcommand{\second}[1]{\textcolor{BurntOrange}{\textbf{#1}}}
\newcommand{\third}[1]{\textcolor{Periwinkle}{\textbf{#1}}}

\begin{table}[h!]
    \centering
   \caption{Test performance on five benchmarks from \emph{Benchmarking GNNs} (baselines please see Appx.~\ref{appx:baseline_info} for details and references). 
    Shown is the mean $\pm$ s.d. of 4 runs with different random seeds. Highlighted are the top \first{first}, \second{second}, and \third{third} results. 
    \# Param under $500K$ for ZINC, PATTERN, CLUSTER and $\sim 100K$ for MNIST and CIFAR10.} 
    
    \resizebox{0.9\textwidth}{!}{
    \begin{tabular}{lccccc}
    \toprule
       \textbf{Model}  &\textbf{ZINC} &\textbf{MNIST} &\textbf{CIFAR10} &\textbf{PATTERN} &\textbf{ CLUSTER} \\
       \cmidrule{2-6} 
       &\textbf{MAE}$\downarrow$  &\textbf{Accuracy}$\uparrow$ &\textbf{Accuracy}$\uparrow$ &\textbf{W. Accuracy}$\uparrow$ &\textbf{W. Accuracy}$\uparrow$ \\
       \midrule 
        \multicolumn{6}{c}{Message Passing Networks} \\
       \midrule
       GCN  &$0.367\pm0.011$ &$90.705\pm0.218$ &$55.710\pm0.381$ &$71.892\pm0.334$ &$68.498\pm0.976$ \\
GIN  &$0.526\pm0.051$ &$96.485\pm0.252$ &$55.255\pm1.527$ &$85.387\pm0.136$ &$64.716\pm1.553$ \\
GAT &$0.384\pm0.007$ &$95.535\pm0.205$ &$64.223\pm0.455$ &$78.271\pm0.186$ &$70.587\pm0.447$ \\
GatedGCN &$0.282\pm0.015$ &$97.340\pm0.143$ &$67.312\pm0.311$ &$85.568\pm0.088$ &$73.840\pm0.326$ \\
PNA &$0.188\pm0.004$ &$97.94\pm0.12$ &$70.35\pm0.63$ &$-$ &$-$ \\
       \midrule 
       \multicolumn{6}{c}{Non-MPNN Graph Neural Networks} \\
\midrule
CRaW1 &$0.085\pm0.004$ &${97.944}\pm{0.050}$ &$69.013\pm0.259$ &$-$ &$-$ \\
GIN-AK+ &${0 . 0 8 0}\pm{0 . 0 0 1}$ &$-$ &$72.19\pm0.13$ &$86.850\pm0.057$ &$-$ \\
DGN &$0.168\pm0.003$ &$-$ &\third{$\mathbf{72.838\pm0.417}$} &$86.680\pm0.034$ &$-$ \\
CKGCN &\second{$\mathbf{0.059\pm0.003}$}       &\second{$\mathbf{98.423\pm0.155}$}         &$72.785\pm0.436$        &\second{$\mathbf{88.661\pm0.143}$}  &$79.003\pm0.140$       \\
       \midrule 
        \multicolumn{6}{c}{Graph Transformers} \\
\midrule SAN &$0.139\pm0.006$ &$-$ &$-$ &$86.581\pm0.037$ &$76.691\pm0.65$ \\
K-Subgraph SAT &$0.094\pm0.008$ &$-$ &$-$ &{${86.848\pm0.037}$} &$77.856\pm0.104$ \\
EGT &$0.108\pm0.009$ &\third{$\mathbf{98.173\pm0.087}$} &$68.702\pm0.409$ &$86.821\pm0.020$ &\third{$\mathbf{79.232\pm0.348}$} \\
Graphormer-GD &$0.081\pm0.009$ &$-$ &$-$ &$-$ &$-$ \\
 GPS &$0.070\pm0.004$ &{${98.051\pm0.126}$} &{${72.298\pm0.356}$} &$86.685\pm0.059$ &{${78.016\pm0.180}$} \\
GMLP-Mixer &  $0.077 \pm 0.003$ &$-$ &$-$ &$-$ &$-$\\
GRIT &\second{$\mathbf{0.059\pm0.002}$} &$98.108\pm0.111$ &\second{$\mathbf{76.468\pm0.881}$} &\third{$\mathbf{87.196\pm0.076}$} &\second{$\mathbf{80.026\pm0.277}$} \\
\midrule 
PPGT  &\first{$\mathbf{0.0566\pm0.002}$}& 
\first{$\mathbf{98.614\pm0.096}$}
&\first{$\mathbf{78.560 \pm 0.700}$} 
&\first{$\mathbf{89.752 \pm 0.030}$}
&\first{$\mathbf{80.027\pm0.114}$} \\
\bottomrule
\end{tabular}}
\label{tab:exp_main}
\vskip -0.1in
\end{table}

\paragraph{Long Range Graph Benchmarks}
We present experimental results on three~\emph{Long-range Graph Benchmark (LRGB)}~\cite{dwivedi2022LongRangeGraph} datasets -- Peptides-Function, Peptides-Structure and PASCALVOC-SP in Table~\ref{tab:lrgb}.
\sgt\ achieves the lowest MAE on Peptides-Structure, top F1 on PascalVoc-SP and remains in the top three models on Peptides-Function. 
We adopt the updated experimental setup from LRGB as described in \citet{tonshoff2024WhereDidGap}, and report the corresponding results.
Moreover, the difference in AP for Peptides-Function dataset between the best performing GRIT and our \sgt\ is not statistically significant at the 5\% level for a one-tailed t-test.
These results demonstrate \sgt's capability of learning long-range dependency structures.

\subsection{Benchmarking PPGT on Large-scale Graph Benchmark and Large-scale-graph Benchmark}

Scaling behavior with respect to both model size and dataset size is a critical consideration.
Accordingly, in this section, we evaluate the applicability of PPGT to large-scale data settings without imposing a strict parameter budget.

We consider two distinct types of large-scale data settings of graphs:
\emph{large-scale graph benchmarks} and \emph{large-scale-graph benchmarks}. Although prior work often conflates these two scenarios, the distinction is important.

Specifically, \emph{large-scale (graph) datasets} comprise many graph instances, requiring scalability with respect to dataset size. In contrast, \emph{large-scale graph datasets} consist of a single graph with an extremely large number of nodes, necessitating scalability with respect to input size (analogous to Gigapixel image processing). A more detailed discussion is provided in Appendix~\ref{appx:large_scale}.

\paragraph{Large-scale Graph Benchmark: PCQM4Mv2}

To further assess the scalability of PPGT to large-scale data,
 we evaluate its performance on the PCQM4Mv2 large-scale graph regression benchmark, which consists of 3.7 million~graphs~\cite{hu2021OGBLSC}. This dataset is among the largest-scale graph benchmarks to date (see Table~\ref{tab:pcqm4mv2}).
Following the experimental protocol of \citet{rampasek2022RecipeGeneralPowerful}, we exclude the 3D information from the model attributes and use the PCQM4Mv2 validation set in place of the original private \textit{Test-dev} set for evaluation.
By omitting 3D information, we aim to more accurately measure each model's intrinsic graph learning capability, ensuring that results do not reflect reliance on 3D coordinates.

The result of a single random seed run is reported due to the size of the dataset, following previous works.
Our model performs well, comparably to GRIT and GraphGPS, with a similar parameter budget.

    \begin{table}
        \captionof{table}{Performance comparison on \textit{Long-Range Graph Benchmark (LRGB)} -- \textit{Peptides} and \textit{PascalVoc-SP} datasets.
        (mean $\pm$ s.d. of 4 runs). Highlighted are the top \first{first}, \second{second}, and \third{third} results.}
        \label{tab:lrgb}
        \vspace{-1em}
        \vskip 0.15in
        \setlength{\tabcolsep}{2pt}
        \centering
        \resizebox{0.6\textwidth}{!}{
        \scriptsize
        \begin{tabular}{lccc}
            \toprule
            \textbf{Method} & \textbf{Peptides-Func} & \textbf{Peptides-Struct} & \textbf{PascalVoc-SP} \\
            \cmidrule{2-4} 
            & \textbf{AP} $\uparrow$ & \textbf{MAE} $\downarrow$ & \textbf{F1} $\uparrow$ \\
            \midrule
            GCN & $0.6860\pm0.0050$ & \second{$\mathbf{0.2460\pm0.0007}$} & $0.2078\pm0.0031$ \\
            GINE & $0.6621\pm0.0067$ & \third{$\mathbf{0.2473\pm0.0017}$} & $0.2718\pm0.0054$ \\
            GatedGCN & $0.6765\pm0.0047$ & $0.2477\pm0.0009$ & $0.3880\pm0.0040$ \\
            DRew & \first{$\mathbf{0.7150\pm0.0044}$} & $0.2536\pm0.0015$ & $0.3314\pm0.0024$ \\
            Exphormer & $0.6527\pm0.0043$ & $0.2481\pm0.0007$ & \third{$\mathbf{0.3960\pm0.0027}$} \\
            GPS & $0.6534\pm0.0090$ & $0.2509\pm0.0010$ & \second{$\mathbf{0.4440\pm0.0065}$} \\
            GRIT & \second{$\mathbf{0.6988\pm0.0082}$} & \second{$\mathbf{0.2460\pm0.0012}$} & $-$ \\
            \midrule
            PPGT & \third{$\mathbf{0.6961\pm0.0062}$} & \first{$\mathbf{0.2450\pm0.0017}$} & \first{$\mathbf{0.4641\pm0.0033}$} \\
            \bottomrule
        \end{tabular}
        }
        \label{tab:results}
    \end{table}

\begin{table}[ht]
        \captionof{table}{Test performance on \emph{PCQM4Mv2} dataset.
        Shown is the result of a single run due to the computation constraint. Highlighted are the top \first{first}, \second{second}, and \third{third} results.
        }
        \centering
        \setlength{\tabcolsep}{2pt}
        \resizebox{0.55\textwidth}{!}{
        \begin{tabular}{clcc}
        \toprule
           \textbf{Method} & \textbf{Model}  & 
           \textbf{Valid.} (MAE $\downarrow$) & \textbf{\#~Param} \\ \midrule
            \multirow{5}{*}{MPNNs}
           & GCN & $0.1379$ & 2.0M\\
           & GCN-virtual & $0.1153$ & 4.9M\\
           & GIN & $0.1195$ & 3.8M \\ 
           & GIN-virtual & $0.1083$ & 6.7M\\ 
           \midrule
           & GRPE & $0.0890$ & 46.2M \\
            & Graphormer & $0.0864$ & 48.3M \\
           & TokenGT (ORF) & $0.0962$ & 48.6M \\
          Graph      & TokenGT (Lap) & $0.0910$ & 48.5M \\
           Transformers & GPS-small & $0.0938$ & 6.2M \\
          & GPS-medium & \second{$\mathbf{0.0858}$} & 19.4M\\
           & GRIT & \third{$\mathbf{0.0859}$} & 16.6M \\
           \cmidrule{2-4} 
           & \sgt~(Ours) & \first{$\mathbf{0.0856}$} & 17.6M \\
           \bottomrule
        \end{tabular}
        }
        \label{tab:pcqm4mv2}
\end{table}

\paragraph{Large-scale-graph Benchmark: OGBN-ArXiv}

Unlike large-scale graph benchmarks, which prioritize the model’s capacity to learn from many graph instances, 
a large-scale-graph benchmark typically emphasizes memory efficiency to handle the substantial size of an individual graph.

Even though PPGT is inherently not designed for processing large-size inputs,
we also benchmark our approach on a large-scale-graph dataset: OGBN-ArXiv~\cite{hu2020OpenGraphBenchmark}\footnote{No feature enhancement techniques, such as UniMP~\cite{shi2021MaskedLabelPrediction} or GIANT~\cite{chien2021NodeFeatureExtraction}, are applied.}.


To handle large graphs, we adopt an additional graph sampling strategy. Specifically, we convert node-level tasks into graph-level tasks by extracting a local inducted subgraph around each target node using breadth-first search (BFS) node sampling,
referred to as Node2Subgraph conversion. 
Similar techniques exist~\cite{zeng2020GraphSAINTGraphSampling, sun2023AllInOne}, but we do not explore them here, as they are beyond the focus of this work.
Even in efficiency-oriented graph Transformers, graph sampling techniques are widely adopted to handle ultra–large-scale input graphs that cannot be processed directly.

We compare PPGT with several state-of-the-art efficiency-oriented graph Transformers, including NodeFormer~\cite{wu2022NodeFormerScalableGraph}, Exphormer~\cite{shirzad2023ExphormerSparseTransformers}, and SGFormer~\cite{wu2023SGFormerSimplifyingEmpoweringa}.
These models aim for full node coverage on large graphs but often compromise expressivity, as linear-attention graph Transformers exhibit the equivalent theoretical expressivity as MPNNs with virtual nodes~\cite{cai2023ConnectionMPNNGraph}.
PPGT maintains high expressivity through Node2Subgraph sampling, even at the cost of reduced node coverage, and achieves performance comparable to these SOTA efficiency-oriented graph Transformers.

These results suggest that both model expressivity and node coverage are essential for learning on large graphs, and achieving a balance between the two is more important than overemphasizing either.

\begin{table}[h!]
    \centering
    \caption{Testing results (Accuracy) on OGBN-ArXiv (mean $\pm$ s.d. of 4 runs). Highlighted are the top \first{first}, \second{second}, and \third{third} results. Baseline results from ~\citep{wu2023SGFormerSimplifyingEmpoweringa,shirzad2023ExphormerSparseTransformers, li2022DeeperGCNAllYou}. (*DeeperGCN does not provide s.d.) }
    \label{tab:ogbn-arxiv-results}
    \resizebox{\textwidth}{!}{%
    \begin{tabular}{c|cccccccc}
    \hline
    \textbf{Method} & GCN &  \begin{tabular}[c]{@{}c@{}}GCN-\\ NSampler\end{tabular} & DeeperGCN & SIGN & NodeFormer & SGFormer & Exphormer & PPGT \\ \hline
    \textbf{OGBN-ArXiv} & $71.74 \pm 0.29$ &  $68.50 \pm 0.23$ & $71.90$ & $70.28 \pm 0.25$ & $59.90 \pm 0.42$ & \first{$72.63 \pm 0.13$} & \third{$72.44 \pm 0.28$}  & \second{$72.46 \pm 0.15$} \\ \hline
    \end{tabular}%
    }
\end{table}

\subsection{Ablation Study on Proposed Designs}

We perform a detailed ablation experiment on ZINC to study the usefulness of each architectural modification proposed in this work.
From Figure~\ref{fig:ablation}, we observe that replacing the complicated, conditional MLP-based attention computation in GRIT~\citep{ma2023GraphInductiveBiases} by SDP attention leads to worse performance if BN~\cite{ioffe2015BatchNormalizationAccelerating} is used. 
This suggests that BN's inability to regulate the token magnitude information hurts performance. 
Using s$L_2$ attention with BN is slightly better, showing that the s$L_2$ attention improves over SDP by mitigating the bias towards large magnitude keys.
The same trend holds for AdaRMSN as well.
Moreover, SDP+ARMSN performs better than SDP+RMSN, showing that the flexibility of preserving the magnitude information contributes positively towards performance.
Finally, we observe that the use of URPE and sinusoidal PE enhancement provides additional benefits.

    \begin{figure}
    \centering
    \includegraphics[width=0.6\linewidth]{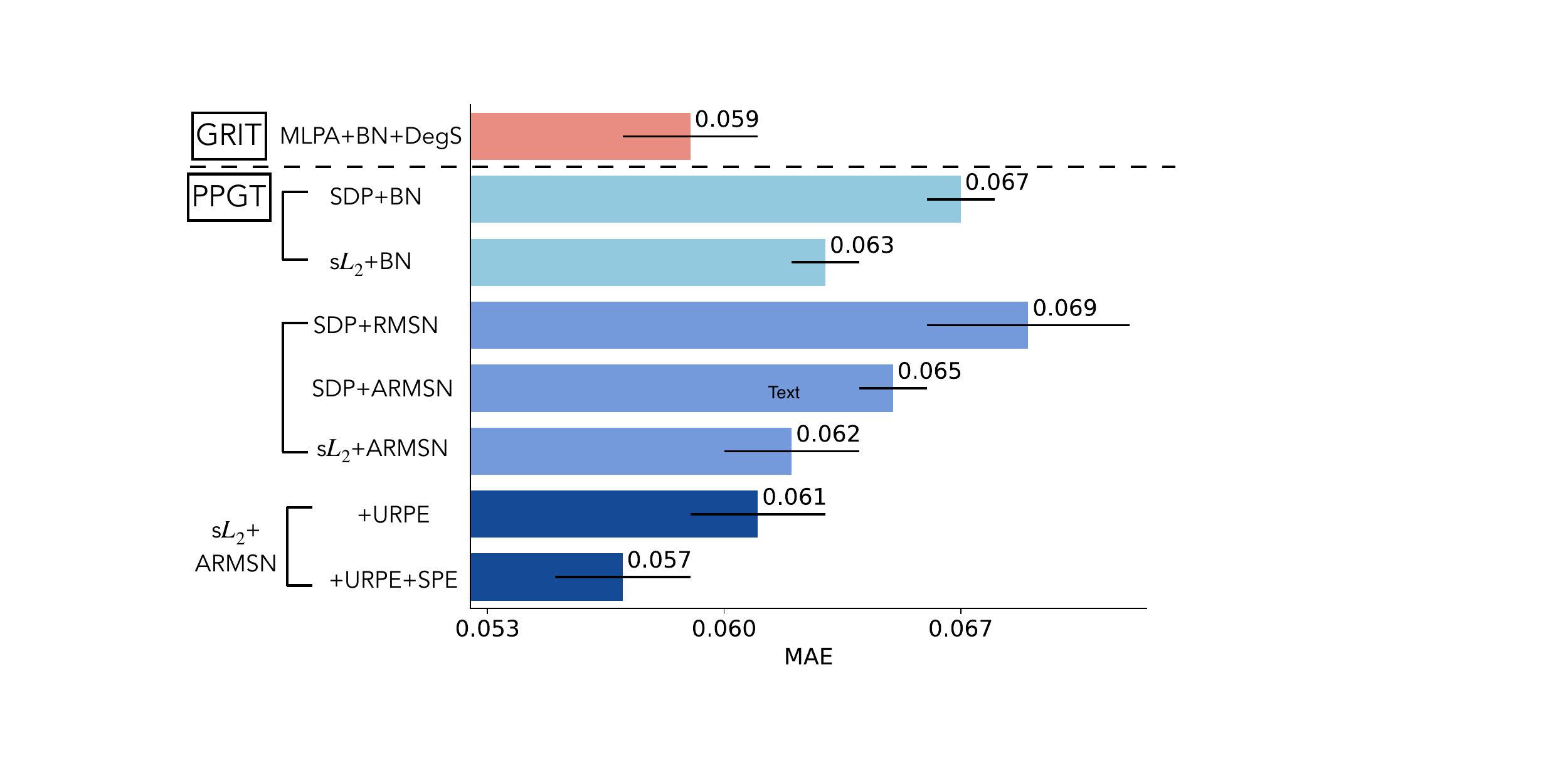}
    \captionof{figure}{Ablation Study on \emph{ZINC}. MLPA: Conditional MLP Attention; DegS: degree scaler; ARMSN: AdaRMSNorm; URP: Universal RPE; SPE: Sinusoidal PE enhancement.}
    \label{fig:ablation}
    \end{figure}


\subsection{Further Study} In Appx.~\ref{appx:additional_study}, we present additional experimental analyses, including:  
(\ref{appx:abl_sin_brec}) a sensitivity study on the number of bases $S$ in SPE, demonstrating the impact of SPE on empirical expressivity;  
(\ref{appx:case_study_norm}) a case study of different normalization layers, showing that RMSN discards magnitude information whereas AdaRMSN preserves it;  
(\ref{appx:adarmsn_batch_size}) a batch-size sensitivity analysis of AdaRMSN, demonstrating its robustness to batch size, in contrast to BN;  
and (\ref{appx:runtime}) a comparison of runtime and GPU memory consumption between PPGT and GRIT.

\section{Conclusion}
\label{sec:conclusion}
\vspace{-0.5em}
Plain Transformers are ill-suited for handling the unique challenges posed by graphs, such as the lack of canonical coordinates and permutation invariance.
To obtain superior capacity and empirical performance,
previous graph Transformers (GTs) have introduced non-standard, sophisticated, and domain-specific architectural modifications.
In this work, we demonstrate that plain Transformers can be powerful graph learners via the proposed minimal, easy-to-adapt modifications.
Our Powerful Plain Graph Transformers (PPGTs) not only achieve competitive expressivity,
 but also demonstrate strong empirical performance on real-world graph benchmarks
while maintaining the simplicity of plain Transformer architectures.
Further exploration of empirical expressivity also unveils a potential direction for improving plain GTs.

We consider this work an important first step toward reducing dissimilarities between GTs and Transformers in other domains. This potentially eases the design of general multi-modal foundation models that integrate learning capabilities on graphs and potentially other irregular non-Euclidean geometric spaces.

Beyond graph domains, the insights from our work may also benefit Transformer architectures in other domains, broadening their applicability and impact.

\vspace{-0.5em}
\paragraph{Limitations:}
This work demonstrates that plain Transformers can serve as powerful graph learners without external graph models (e.g., MPNNs). 
However, like most vanilla Transformer architectures, PPGTs incur an $\mathcal{O}(N^2)$ computational complexity, 
which limits their direct applicability to large-size input graphs. 
To mitigate this issue, graph sampling techniques can be employed to handle large inputs more efficiently. 
A more detailed discussion of these limitations, along with potential directions for addressing them, is provided in Appx.~\ref{appx:limitations}.


\bibliography{ref}
\bibliographystyle{tmlr}

\appendix

\newpage

\section{Experimental Details}
\label{appx:experiment_details}

\subsection{Benchmarking GNNs and Long Range Graph Benchmarks}
\paragraph{Description of Datasets} 

Table~\ref{tab:dataset} provides a summary of the statistics and characteristics of the datasets used in this paper. The first five datasets are from \citet{dwivedi2022BenchmarkingGraphNeural}, and the last two are from \citet{dwivedi2022LongRangeGraph}.
Readers are referred to \citet{rampasek2022RecipeGeneralPowerful} for more details of the datasets.

\begin{table}[h!]
    \centering
    \caption{Overview of the graph learning datasets involved in this work~\cite{dwivedi2022BenchmarkingGraphNeural, dwivedi2022LongRangeGraph, irwin2012ZINCFreeTool, hu2021OGBLSC}.}
    \vskip 0.15in
    \small
    \setlength{\tabcolsep}{1.6pt}
    \resizebox{1\textwidth}{!}{
    \begin{tabular}{l|ccccccc}
    \toprule
       \textbf{Dataset} &\textbf{\# Graphs} &\textbf{Avg. \# nodes} &\textbf{Avg. \# edges}  &\textbf{Directed} 
 &\textbf{Prediction level} &\textbf{Prediction task} &\textbf{Metric}\\
 \midrule
        ZINC &12,000  &23.2 &24.9 &No &Graph &Regression &Mean Abs. Error \\
        SP-MNIST &70,000  &70.6  &564.5 &Yes  &Graph &10-class classif. &Accuracy \\
        SP-CIFAR10 &60,000 &117.6 &941.1 &Yes  &Graph &10-class classif. &Accuracy \\
        PATTERN &14,000 &118.9 &3,039.3  &No &Inductive Node & Binary classif. &Weighted Accuracy \\
        CLUSTER &12,000 &117.2 &2,150.9 &No &Inductive Node & 6-class classif. &Weighted Accuracy \\ 
        \midrule
        Peptides-func &15,535 &150.9 &307.3 &No &Graph & 10-task classif. &Avg. Precision \\
        Peptides-struct &15,535  &150.9 &307.3 &No &Graph & 11-task regression &Mean Abs. Error  \\
        PascalVoc-SP & 11,355 & 479.4 & 2,710.5 & No & Inductive Node &  21-class classif. & Marco F1 \\
        \midrule
        PCQM4Mv2 &3,746,620 &14.1 &14.6 &No &Graph &regression &Mean Abs. Error \\
        OGBN-ArXiv & 1 & 169,343 & 1,116,243  & Yes & Transductive Node & 40-class classif. & Accuracy \\
        \bottomrule
    \end{tabular}
    }
    \label{tab:dataset}
\end{table}

\paragraph{Dataset splits, random seed, and parameter budgets}
We conduct the experiments on the standard train/validation/test splits of the evaluated benchmarks, 
following previous works~\cite{rampasek2022RecipeGeneralPowerful, ma2023GraphInductiveBiases}.
For each dataset, we execute 4 trials with different random seeds (0, 1, 2, 3) and report the mean performance and standard deviation.
We follow the most commonly used parameter budgets: around 500k parameters for ZINC, PATTERN, CLUSTER, Peptides-func,
and Peptides-struct; 
and around 100k parameters for SP-MNIST and SP-CIFAR10.

\paragraph{Hyperparameters}

Due to the limited time and computational resources, we did not perform an exhaustive search on the hyperparameters.
We start with the hyperparameter setting of GRIT~\cite{ma2023GraphInductiveBiases} and perform minimal search to satisfy the commonly used parameter budgets.

The hyperparameters are presented in Table~\ref{tab:bmgnn_hparam} and Table~\ref{tab:lrgb_hparam}.
In the tables, S.D. stands for stochastic depth~\cite{huang2016DeepNetworksStochastic}, a.k.a., drop-path, in which we treat a graph as one example in stochastic depth.

For Peptides-func, we find that there exists a mismatch between the cross-entropy loss and the metric, average precision, due to the highly imbalanced label distribution. 
Empirically, we observe that adding a BN in the prediction head post-graph-pooling effectively mitigates the problem.

\begin{table}[h!]
    \centering
    \caption{Hyperparameters for five datasets from Benchmarking GNNs~\cite{dwivedi2022BenchmarkingGraphNeural}}
    \label{tab:bmgnn_hparam}
        \vskip 0.15in
\begin{tabular}{lccccc}
\toprule
Hyperparameter & ZINC & MNIST & CIFAR10 & PATTERN & CLUSTER \\
\midrule
\textbf{PE Stem }& \\
MLP-dim & 128 & 64  &  64 & 128 & 128 \\
Edge-dim & 64 & 32 & 32 & 64 & 64 \\
\# FFN & 2 & 1 & 2 & 2 & 2 \\
FFN expansion & 2 & 2 & 2 & 2 & 2 \\
PE-dim & 24 & 10 & 10 & 32 & 32 \\
\# bases & 3 & 3 & 3 & 3 & 3 \\
\midrule
\textbf{Backbone }&  \\
\# blocks & 12 & 4 & 10 & 12 & 16 \\
Dim & 64 & 48 & 32 & 64 & 56 \\
FFN expansion & 2 & 2 & 2 & 2 & 2 \\
\# attn. heads & 8 & 6 & 4 & 8 & 7 \\
S.D. & 0.1 & 0.2 & 0.2 & 0.1 & 0.3 \\
Attn. dropout & 0.2 & 0.5 & 0.2 & 0.1 & 0.3 \\
\midrule
\textbf{Pred. Head} \\
Graph Pooling & sum & mean & mean & - & - \\
Norm & - & - & - & - & - \\
\# layers & 3 & 2 & 2 & 2 & 2 \\
\midrule
\textbf{Training} \\
Batch size & 32 & 32 & 16 & 32 & 16 \\
Learn. rate & 2e-3 & 1e-3 & 1e-3 & 1e-3 & 1e-3 \\
\# epochs & 2500 & 400 & 200 & 400 & 150 \\
\# warmup & 50 & 10 & 10 & 10 & 10 \\
Weight decay & 1e-5 & 1e-5 & 1e-5 &  1e-5 & 1e-5 \\
\midrule
\# parameters & 487K & 102K & 108K & 497K & 496K \\
\bottomrule
\end{tabular}
\end{table}

\begin{table}[ht]
    \centering
        \caption{Hyperparameters for two datasets from the Long-range Graph Benchmark~\cite{dwivedi2022LongRangeGraph}}
            \vskip 0.1in
    \label{tab:lrgb_hparam}
    \begin{tabular}{lcccc}
\midrule Hyperparameter & Peptides-func & Peptides-struct  & PascalVoc-SP \\
\midrule 
\textbf{PE Stem }& \\
MLP-dim & 128 & 128 & 128  \\
Edge-dim & 64 & 64 & 64 \\
\# FFN & 1 & 1 & 1 \\
FFN expansion & 2 & 2  & 2\\
\# bases  & 3 & 3  & 3\\
PE-dim & 32 & 24 & 16 \\
\midrule
\textbf{Backbone }&  \\
\# blocks & 5 & 5  & 12 \\
Dim & 96 & 96  &  64 \\
FFN expansion & 2 & 2 & 2  \\
\# attn. heads & 16 & 8 & 8 \\
S.D. & 0.1 & 0.1  & 0.2 \\
Attn. dropout & 0.2 & 0.1 & 0.3 \\
\midrule
\textbf{Pred. Head} \\
Graph Pooling & sum & mean & - \\
Norm & BN & -  & - \\
\# layers & 3 & 2 & 2 \\
\midrule
\textbf{Training} \\
Batch size & 32 & 32 & 16 \\
Learn. rate & 7e-4 & 7e-4 & 1e-3 \\
\# epochs & 400 & 250 & 200 \\
\# warmup & 10 & 10 & 10 \\
Weight decay & 1e-5 & 1e-5 & 1e-2 \\
\midrule
\# Parameters & 509K & 488K  &  492K \\
\bottomrule
\end{tabular}
\end{table}

\subsection{BREC: Empirical Expressivity Benchmark}

BREC~\cite{wang2024EmpiricalStudyRealized} is an empirical expressivity dataset, consisting of 
four major categories of graphs: Basic, Regular,
Extension, and CFI. 
\textit{Basic graphs} include $60$ pairs of simple
1-WL-indistinguishable graphs. 
\textit{Regular graphs} include $140$ pairs of regular graphs from four
types of subcategories: $50$ pairs of simple regular graphs, $50$ pairs of strongly regular graphs, $20$ pairs of 4-vertex condition graphs and $20$ pairs of distance regular graphs.
\textit{Extension graphs}
include $100$ pairs of special graphs that arise when comparing four kinds
of GNN extensions~\cite{papp2022TheoreticalComparisonGraph}.
\textit{CFI graphs}
include $100$ pairs of graphs generated by CFI methods~\cite{cai1992OptimalLowerBound}.
All pairs of graphs are 1-WL-indistinguishable.

We follow the standard training pipelines from BREC: pairwise contrastive training and evaluating process. No specific parameter budget required on the BREC dataset.

\paragraph{Hyperparameter}
The hyperparameter setting for PPGT on BREC can be found in Table~\ref{tab:brec_hparam}.

For the experimental setup of I$^2$-GNN + PPGT,
we construct a subgraph for each edge within the 4-hop neighborhood
and use RRWP as the node-attributes.
We employ a 6-layer GIN with BN on each subgraph independently to get its representation. 
The subgraph representations are fed to PPGT as additional edge-attributes.
The I$^2$-GNN and PPGT are trained end-to-end together.

\begin{table}[ht]
    \centering
        \caption{Hyperparameters for PCQM4Mv2~\cite{hu2021OGBLSC} and OGBN-ArXiv~\cite{hu2020OpenGraphBenchmark}.}
                    \vskip 0.1in
    \label{tab:brec_hparam}
    \begin{tabular}{lccc}
\midrule Hyperparameter & PCQM4Mv2 & OGBN-ArXiv \\  
\midrule 
\textbf{PE Stem }& \\
MLP-dim & 512  & 192 \\
Edge-dim & 128 & 96  \\
\# FFN & 8  & 2 \\
FFN expansion & 2  & 2 \\
\# bases & 10 &   3 \\
PE-dim & 24  & 5 (dual direction) \\
\midrule
\textbf{Backbone }&  \\
\# blocks & 16  & 10+2(class attention)  \\
Dim & 512  & 192 \\
FFN expansion & 2 & 2 \\
\# attn. heads & 16  & 8  \\
S.D. & 0.2 & 0.5 \\
Attn. dropout & 0.2 & 0.5 \\
\midrule
\textbf{Node2Subgraph}&  \\
Node Masking & - & 0.5 \\
Max Size of Graph& - & 100 \\
\midrule
\textbf{Pred. Head} \\
Graph Pooling & sum & class attention   \\
Norm &  - & -   \\
\# layers & 3 & 1  \\
\midrule
\textbf{Training} \\
Batch size & 2048  & 128  \\
Learn. rate & 1e-3 & 2e-3 \\
\# epochs & 500 & 300 \\
\# warmup & 50 & 10  \\
Weight decay & 1e-3 & 5e-2 \\
\midrule
\# parameters & 17.6M & 3.74M  \\
\bottomrule
\end{tabular}
\end{table}

\subsection{PCQM4Mv2 from OGB Large-Scale Challenge}

PCQM4Mv2 is a large-scale quantum chemistry graph dataset benchmark, containing over 3.7M graphs, proposed from OGB Large-Scale Challenge (OGB-LSC)~\cite{hu2021OGBLSC}.
The statistics of the dataset can be found in Table~\ref{tab:dataset}.

\subsection{OGBN-ArXiv from OGB Benchmark}

OGBN-ArXiv is a large-scale graph benchmark featuring a transductive node classification task from Open Graph Benchmark~\cite{hu2020OpenGraphBenchmark}.
Unlike the other graph datasets used in this work, OGBN-ArXiv comprises a single large-scale graph containing over 169,000 nodes.
This benchmark emphasizes the graph models' scalability to large input graphs.
The statistics of the dataset can be found in Table~\ref{tab:dataset}.

For PPGT, we convert the transductive node-classification task on OGBN-ArXiv into a graph-classification setting via the Node2Subgraph transformation.
In this framework, we employ the class-attention mechanism~\cite{touvron2021GoingDeeperImage} as the output head.
As OGBN-ArXiv is a directed graph, we compute dual-direction RRWP features, i.e., random walks along both in-edge and out-edge directions.

\begin{table}[ht]
    \centering
        \caption{Hyperparameters for BREC~\cite{wang2024EmpiricalStudyRealized}}
                    \vskip 0.1in
    \label{tab:brec_hparam}
    \begin{tabular}{lccc}
\midrule Hyperparameter & BREC &  \\
\midrule 
\textbf{PE Stem }& \\
MLP-dim & 192   \\
Edge-dim & 96 \\
\# FFN & 4  \\
FFN expansion & 2  \\
\# bases & 15 \\
\midrule
\textbf{Backbone }&  \\
\# blocks & 6   \\
Dim & 96  \\
FFN expansion & 2  \\
\# attn. heads & 16   \\
S.D. & 0.  \\
Attn. dropout & 0. \\
\midrule
\textbf{Pred. Head} \\
Graph Pooling & sum   \\
Norm & BN & -  \\
\# layers & 3   \\
\midrule
PE-dim & 32   \\
\midrule
\textbf{Training} \\
Batch size & 32   \\
Learn. rate & 1e-3  \\
\# epochs & 200  \\
\# warmup & 10   \\
Weight decay & 1e-5 \\
\midrule
\# parameters & 874K \\
\bottomrule
\end{tabular}
\end{table}

\subsection{Optimizer and Learning Rate Scheduler}

Following most plain Transformers in other domains,
we use AdamW~\cite{loshchilov2018DecoupledWeightDecay} as the optimizer and the Cosine Annealing Learning Rate scheduler~\cite{loshchilov2017SGDRStochasticGradient} with linear warm up.

\subsection{Baselines Information}
\label{appx:baseline_info}

\paragraph{For benchmarks from BenchmarkingGNN~\cite{dwivedi2022BenchmarkingGraphNeural}.} 

\begin{itemize}
    \item 
\textbf{Tyical Message-passing Networks(MPNNs):}
GCN~\cite{kipf2017SemiSupervisedClassificationGraph}, 
GIN~\cite{xu2019HowPowerfulAre}, 
GAT~\cite{velickovic2018GraphAttentionNetworks}, 
GatedGCN~\cite{bresson2018ResidualGatedGraph}, 
PNA~\cite{corso2020PrincipalNeighbourhoodAggregation};

\item \textbf{GNNs going beyond MPNNs:}
CRaW1~\cite{tonshoff2023WalkingOutWeisfeiler}, 
GIN-AK+~\cite{zhao2022StarsSubgraphsUplifting}, 
DGN~\cite{beani2021DirectionalGraphNetworks}, 
CKGCN~\cite{ma2024CKGConvGeneralGraph}, 

\item \textbf{Graph Transformers}
SAN~\cite{kreuzer2021RethinkingGraphTransformers}, 
K-Subgraph SAT~\cite{chen2022StructureAwareTransformerGraph}, 
EGT~\cite{hussain2022GlobalSelfAttentionReplacement}, 
Graphormer-GD~\cite{zhang2023RethinkingExpressivePower}, 
GPS~\cite{rampasek2022RecipeGeneralPowerful}, 
GMLP-Mixer~\cite{he2023GeneralizationViTMLPMixer}, 
GRIT~\cite{ma2023GraphInductiveBiases}.
\end{itemize}

\paragraph{For BREC~\cite{wang2024EmpiricalStudyRealized}:}
 \ \\   
    \begin{itemize}
        \item \textbf{Subgraph GNNs}: 
 {SUN}~\cite{frasca2022UnderstandingExtendingSubgraph},
     {SSWL+}~\cite{zhang2023CompleteExpressivenessHierarchy},
     {I$^2$-GNN}~\cite{huang2023BoostingCycleCounting}

     \item \textbf{K-WL/K-FWL GNNs}:
     {PPGN}  \cite{maron2019ProvablyPowerfulGraph},
     {2-DRFWL(2)}  \cite{zhou2023DistanceRestrictedFolkloreWeisfeilerLeman},
     {3-DRFWL(2)}  \cite{zhou2023DistanceRestrictedFolkloreWeisfeilerLeman}
     \item \textbf{K-FWL+SubgraphGNNs}:  
     {N$^2$GNN}~\cite{feng2023ExtendingDesignSpace}
     \item \textbf{GD-WL GNNs}
     Graphormer~\cite{ying2021TransformersReallyPerform},
     {EPNN}~\cite{zhang2024ExpressivePowerSpectral},
     {CKGConv}~\cite{ma2024CKGConvGeneralGraph},
     {GRIT}~\cite{ma2023GraphInductiveBiases}.
\end{itemize}

\paragraph{For OGB-LSC~\cite{hu2021OGBLSC}:} \ \\
MPNNs (GCN~\cite{kipf2017SemiSupervisedClassificationGraph}, GIN~\cite{xu2019HowPowerfulAre} with/without virtual nodes) as well as several Graph Transformers (GRPE~\cite{park2022GRPERelativePositional}, Graphormer~\cite{ying2021TransformersReallyPerform}, TokenGT~\cite{kim2022PureTransformersAre} and GraphGPS~\cite{rampasek2022RecipeGeneralPowerful}).

\section{Additional Study}
\label{appx:additional_study}

\subsection{Sensitivity Study:
Impact of the Number of Bases $S$ in SPE on Empirical Expressivity}
\label{appx:abl_sin_brec}
We conduct a sensitivity analysis on the hyperparameter 
$S$ in the sinusoidal PE enhancement, focusing on its impact on empirical expressivity.
We notice that $S$ has no impact on distinguishing basic, regular, and extension graphs,
but it displays a stronger influence on distinguishing CFI graphs.
In this sensitivity study, we fix the other architectural components and hyperparameters and only vary $S$ along with the size of the first fully-connected layer.

As shown in Table~\ref{tab:sin_pe_cfi},
gradually increasing $S$ leads to a better ability to distinguish CFI graphs.
This observation matches our motivation for incorporating SPE, as discussed in Sec.~\ref{sec:sin_pe_enc}.
SPE can effectively enhance the signal differences in graph PE, and thus can make it easier for MLPs to learn how to extract pertinent information. However, the accuracy does not increase monotonically with $S$. 
Once $S$ is sufficiently large, further increasing it does not necessarily lead to stronger empirical expressivity, and potentially can lead to overfitting and/or demand more training iterations.

\begin{table}[!ht]
    \vskip -0.1in
    \centering
    \caption{Sensitivity study on $S$ in Sinusoidal encoding for RRWP on CFI graphs from BREC~\cite{wang2024EmpiricalStudyRealized}.
    }
    \vskip 0.1in
    \label{tab:sin_pe_cfi}
    \scriptsize
    \begin{tabular}{l|ccccccc}
    \toprule
        SPE ($S=$) & 0 & 3 & 6 & 9 & 12 & 15  & 20 \\ \midrule
        \# Correct in CFI (100) & 3 &  3 &  9 &  17 & 20 & 24 & 22  \\ \bottomrule
    \end{tabular}
        \vskip -0.1in
\end{table}

\subsection{Case Study Comparing BN, RMSN and AdaRMSN}
\label{appx:case_study_norm}

We conduct an additional case study to demonstrate the advantages of AdaRMSN in addressing the limitations of token-wise normalization: the ineffectiveness in preserving magnitude information.

We randomly generate a set of points in 2D Euclidean space with magnitudes ranging from 0.5 to 1.5.
Given the data points, we train auto-encoders with BN/RMSN/AdaRMSN, respectively, 
\begin{equation}
    \mathbf{y} = \func{FC} \circ \func{Norm} \circ \func{FC} (\mathbf{x})
    \label{eq:autoencoder}
\end{equation}
where $\func{FC}$ stands for fully-connected layers (i.e., linear layers) and $\func{Norm}$ denotes the normalization layers.
We conduct an overfitting test as a sanity check to measure information loss in the normalization layer via the autoencoders' ability to recover input data points in predictions.
Each auto-encoder is trained independently for 5000 epochs in full-batch mode using the AdamW optimizer.
As illustrated in Fig.~\ref{fig:norm_case_study}, we present visualizations comparing input data points with predictions generated by autoencoders employing BN, RMSN, and AdaRMSN, respectively.
The results demonstrate that RMSN fails to preserve magnitude information, while both BN and AdaRMSN successfully maintain this crucial aspect of the data points.
We note that BN relies on cumulative moving averages of mini-batch statistics to estimate population parameters, which can make it sensitive to the choices of batch size.

\begin{figure}[h!]
    \centering
    \includegraphics[width=0.75\linewidth]{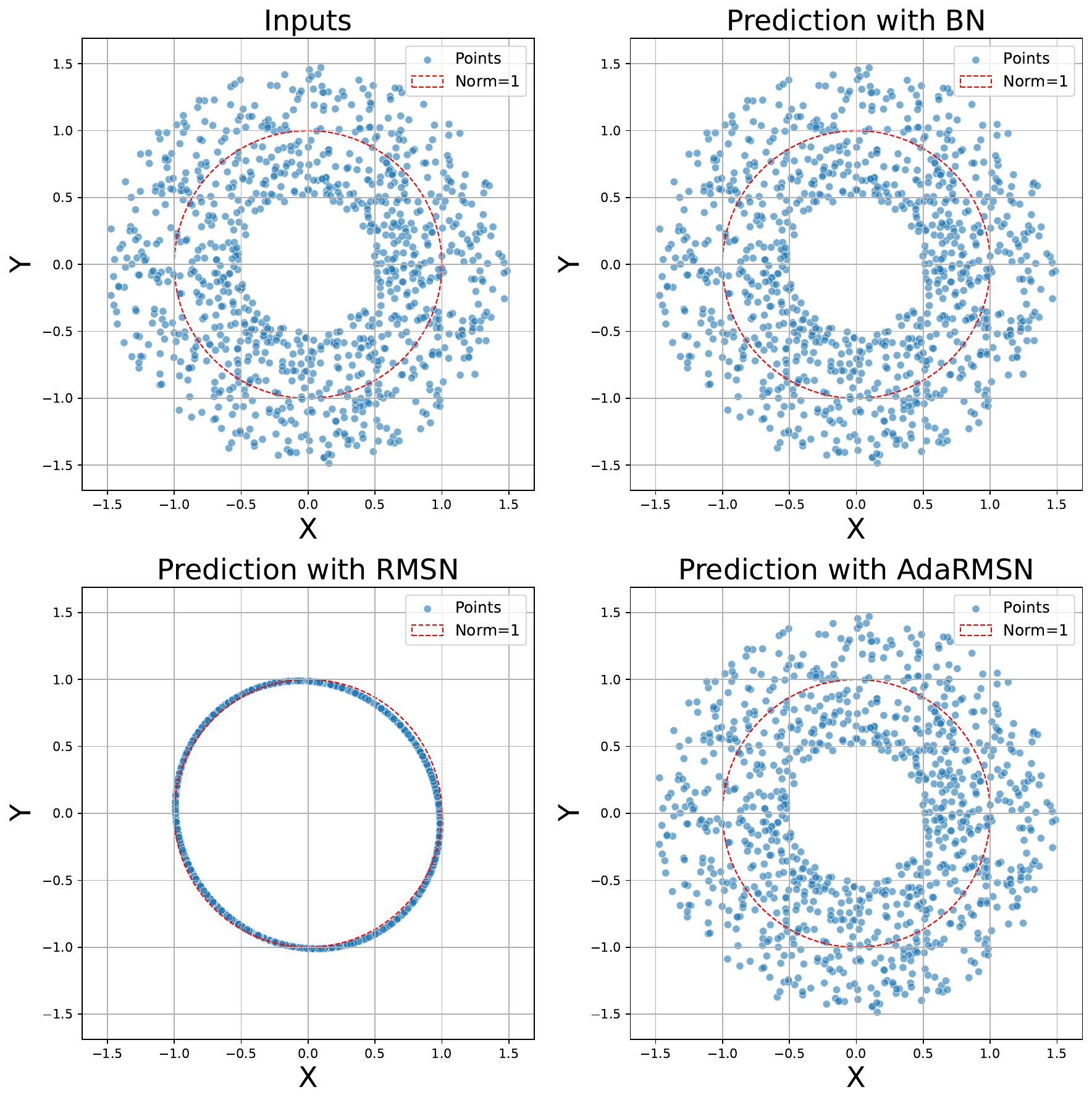}
    \caption{(Case Study of AdaRMSN) Visualization of Input and Pred data points [(1) Input; (2) Predictions w/ BN; (3) Predictions w/
    RMSN; (4) Predictions w/ AdaRMSN].  
    RMSN is ineffective in
    preserving magnitude information, whereas both BN and AdaRMSN successfully maintain the crucial magnitude information of the data
    point}
    \label{fig:norm_case_study}
\end{figure}

\subsection{Sensitivity Study of AdaRMSN w.r.t. Batch Size}
\label{appx:adarmsn_batch_size}

To better understand the advantages of AdaRMSN over BatchNorm (BN), we conducted a sensitivity study comparing normalization techniques w.r.t. batch sizes, based on PPGT on the ZINC dataset 
The study (as shown in Fig.~\ref{fig:norm_sensitivity}) showcases the better stability of AdaRMSN compared to the choice of BN used in many previous GTs~\cite{kreuzer2021RethinkingGraphTransformers,  rampasek2022RecipeGeneralPowerful, ma2023GraphInductiveBiases}.

\begin{figure}[h!]
    \centering
    \includegraphics[width=0.75\linewidth]{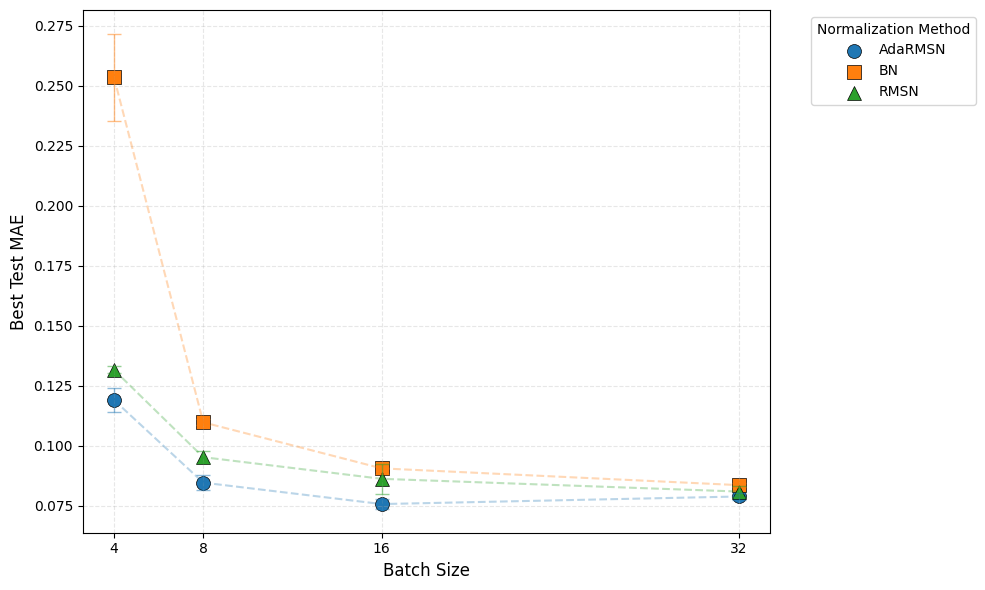}
    \caption{Test MAE of PPGT on ZINC v.s. Batch Size (BS). 
     \# Training epochs are adjusted per batch-size for the same total update steps: $400 * \text{BS}/32$. The first $10\%$ epochs are in the warmup stage. AdaRMSN and RMSN demonstrate better stability and less sensitivity to varying batch sizes compared to BN.}
    \label{fig:norm_sensitivity}
\end{figure}

\subsection{Runtime and GPU Consumption}
\label{appx:runtime}

We report basic statistics on the runtime and GPU memory consumption of PPGT, in comparison with GRIT (see Table~\ref{tab:grit_sgt_time_memory}).  
The results suggest that, even without efficiency-driven techniques, adopting a plain Transformer architecture—rather than more complex graph-specific Transformer designs used in prior work—can yield improved runtime and reduced GPU memory consumption.

\begin{table}[h!]
    \centering
    \caption{Comparison of peak GPU memory usage and per-epoch training time for GRIT and PPGT.
    Dataset: Peptides-Structure (15K graphs); Model config.: 5 transformer layers, 96 channels, batch size 32. 
    Hardware: a single Nvidia V100 GPU with 32GB memory, supported by 80 Intel Xeon Gold 6140 CPUs running at 2.30GHz
    }
    \vskip 1em
    \resizebox{0.5\linewidth}{!}{
    \begin{tabular}{l|cc}
    \toprule
    \textbf{Model} & \textbf{GPU Memory (GB)} & \textbf{Training Time (Sec/Epoch)}  \\
    \midrule
    GRIT & 29.16 & 141.60 \\
    PPGT & 25.07 & 100.68 \\ \midrule
    Improv. & $\sim$14.03\% & $\sim$28.9\% \\
    \bottomrule
    \end{tabular}
    }
    \label{tab:grit_sgt_time_memory}
    \end{table}

\section{Implementation Details}

\subsection{Attention in Graph Transformers}

In SAN~\cite{kreuzer2021RethinkingGraphTransformers} and GRIT~\cite{ma2023GraphInductiveBiases},
the attention mechanism is implemented with the sparse operations in \textit{PyTorch Geometric} due to the complicated attention mechanism.
For PPGTs, we provide two versions, one based on the sparse operations and another one based on the dense operations with padding.
The latter one is typically faster for larger-scale graphs but consumes more memory.

\subsection{Initializations of Parameters}

Following the most recent plain Transformers, 
we initialize the weights of linear layers in the backbone as well as the prediction heads with truncated normalization with standard deviation $\sigma=0.02$.

For the stems, we utilize Kaiming uniform initialization~\cite{he2015DelvingDeepRectifiers} with $a=0$ for hidden layers in MLPs and $a=1$ for output layers in MLPs or standalone linear layers.

For lookup-table-like embedding layer, \textit{nn.Embedding}, we utilize the default normal initialization with $\sigma=1$.

\subsection{Injection of Degree and Graph Order}

For all datasets,
we inject the $\log$-degree of each node and the $\log$-graph-order (i.e., the number of nodes in the graph) as additional node attributes.
Besides RRWP, we inject the reciprocal of degrees for node $i$ and node $j$ as well as the reciprocal of the graph order to $\p'_{ij}$ as an extension.

For superpixel datasets (CIFAR10, MNIST), we also include the location of pixel into the graph PE.

\subsection{Notes on Reproducibility}

Our implementation is built upon \emph{PyTorch} and \emph{PyTorch Geometric}.
For processing graphs, we utilize the \emph{scatter} operations from \emph{PyTorch Geometric}, 
which are known to be non-deterministic for execution on GPUs.
Therefore, even with the same random seed, the experimental results of different trials might vary in an acceptable range.
This statement is applicable to most existing models. 

We conducted the experiments for ZINC with \emph{NVIDIA GeForce RTX 4080 super},
experiments for PCQM4Mv2 on \emph{NVIDIA H100},
and the rest experiments with \emph{NVIDIA A100}.

\section{Additional Related Work}
\label{appx:related}

\subsection{GRIT's Attention}
\label{appx:grit}

\newcommand{\e}{\mathbf{e}}
\newcommand{\x}{\mathbf{x}}
\newcommand{\W}{\mathbf{W}}

The attention mechanism in GRIT~\cite{ma2023GraphInductiveBiases} adopts the conditional MLP~\cite{perez2018FiLMVisualReasoning}, 
involves linear projections, elementwise multiplications, and an uncommon non-linearity in the form of a signed-square-root:
\begin{equation}
   \begin{aligned}
     \hat{\e}_{i,j} & = \sigma\Big( \rho\left(\left(\W_\text{Q} \x_i + \W_\text{K}\x_j\right) \odot \W_\text{Ew}\e_{i,j}\right) 
     + \W_\text{Eb}\e_{i,j} \Big) \in \mathbb{R}^{d'}, \\
    \alpha_{ij} &= \text{Softmax}_{j \in \mathcal{V}} (\W_\text{A} \hat{\e}_{i,j}) \in \mathbb{R},\\
     \hat{\x}_i & =  \sum_{j \in \mathcal{V}} \alpha_{ij} \cdot ( \W_\text{V} \x_j + \W_\text{Ev} \hat{\e}_{i,j}) \in \mathbb{R}^{d},\\
   \end{aligned}
\end{equation}
where $\sigma$ is a non-linear activation (ReLU by default); 
$\W_\text{Q}, \W_\text{K}, \W_\text{Ew}, \W_\text{Eb} \in \mathbb{R}^{d' \times d}$, $\W_\text{A} \in \mathbb{R}^{1 \times d'}$,  $\W_\text{V} \in \mathbb{R}^{d \times d}$ and $\W_\text{Ev} \in \mathbb{R}^{d \times d'}$ are learnable weight matrices; $\odot$ indicates elementwise multiplication; and $\rho(\x):= (\text{ReLU}(\x))^{1/2}-(\text{ReLU}(-\x))^{1/2}$ is the signed-square-root.
Note that $\e_{i,j}$ here corresponds to $\bp_{ij}$ in our PPGT, but it requires updating within the attention mechanism.
According to \citet{ma2023GraphInductiveBiases},
the signed-square-root $\rho$ is necessary to stabilize the training.

\subsection{MPNNs}
As the most widely used GNNs, message-passing neural networks (MPNNs)~\cite{gilmer2017NeuralMessagePassing, kipf2017SemiSupervisedClassificationGraph, hamilton2017InductiveRepresentationLearning, velickovic2018GraphAttentionNetworks} learn graphs, following the 1-WL framework~\cite{xu2019HowPowerfulAre}.
However, there are several known limitations of MPNNs:
(1) over-smoothing~\cite{li2018DeeperInsightsGraph, oono2020GraphNeuralNetworks};
(2) over-squashing and under-reaching~\cite{alon2020BottleneckGraphNeural, topping2022UnderstandingOversquashingBottlenecks};
and (3) limited expressivity bounded by $1$-WL~\cite{xu2019HowPowerfulAre}. 

Researchers dedicate relentless efforts to overcome the aforementioned limitations and lead to three research directions:
(1). graph Transformers; (2) higher-order GNNs; (3) subgraph GNNs.

\subsection{Graph Positional Encoding and Structural Encoding}

Attention mechanisms are structure-invariant operators which sense no structural information inherently.
Therefore, Transformers strongly rely on positional encoding to capture the structural information~\cite{vaswani2017AttentionAllYou, dosovitskiy2021ImageWorth16x16, su2024RoFormerEnhancedTransformer}.
Designing graph positional encoding is challenging compared to the counterpart in Euclidean spaces, due to the irregular structure and the symmetry to permutation.
Widely used graph positional encoding includes absolute PE: LapPE~\cite{dwivedi2021GeneralizationTransformerNetworks, huang2024StabilityExpressivePositional}; and relative PE: shortest-path distance~\cite{ying2021TransformersReallyPerform}, resistance-distance~\cite{zhang2023RethinkingExpressivePower}, and RRWP~\cite{ma2023GraphInductiveBiases}.
Recent works~\cite{black2024ComparingGraphTransformers, zhang2024ExpressivePowerSpectral} study the connections between absolute PE and relative PE.

Besides the aforementioned PE, there exist several structural encoding (SE) approaches that aim to enhance MPNNs, e.g.,
RWSE~\cite{dwivedi2022GraphNeuralNetworks}, substructure counting~\cite{bouritsas2022ImprovingGraphNeural}, and homomorphism counting~\cite{jin2024HomomorphismCountsGraph}.
These SEs can effectively improve the empirical performance and/or theoretical expressivity of MPNNs.
Although they are not designed specifically for graph Transformers,
integrating them into graph Transformers is usually beneficial.

\subsection{Higher-order GNNs}

Besides graph Transformers, inspired by the $K$-WL~\cite{weisfeiler1968ReductionGraphCanonical} and $K$-Folklore WL~\cite{cai1992OptimalLowerBound} frameworks,
$K$-GNNs~\cite{morris2019WeisfeilerLemanGo, zhou2023DistanceRestrictedFolkloreWeisfeilerLeman, maron2019ProvablyPowerfulGraph} uplift GNNs to higher-order, treating a tuple of $K$ nodes as a token and adapting the color refinement algorithms accordingly. 
Some other higher-order GNNs~\cite{bodnar2022WeisfeilerLehmanGo, bodnar2021WeisfeilerLehmanGo} are less closely related to $K$-WL but still perform well.
$K$-GNNs can reach theoretical expressivity bounded by $K$-WL,
but are typically computationally costly, with $O(N^K)$ computational complexity.

\subsection{Subgraph GNNs}
Subgraph GNNs~\cite{bevilacqua2022EquivariantSubgraphAggregation, frasca2022UnderstandingExtendingSubgraph, zhang2024WeisfeilerLehmanQuantitativeFramework, zhang2021NestedGraphNeural, huang2023BoostingCycleCounting} focus on improving the expressivity of GNNs beyond 1-WL.
Unlike graph Transformers and higher-order GNNs,
subgraph GNNs typically do not change the model architecture. 
Instead, MPNNs are trained using a novel learning pipeline: a graph is split into multiple subgraphs, subgraph representations are learned independently, and then the subgraph representations are merged to obtain the graph representation.
Node-level subgraph GNNs are typically bounded by $3$-WL~\cite{frasca2022UnderstandingExtendingSubgraph}, 
while the expressivity of edge-level subgraph GNNs is still under-explored.
\citet{huang2023BoostingCycleCounting} reveals that the expressivity of edge-level GNNs might partially go beyond $3$-WL.
Note that subgraph GNNs usually have high memory requirements due to the need to save multiple copies for each input graph. 

Although most subgraph GNNs use MPNNs as the base model, 
the potential integration with stronger graph models, e.g., higher-order GNNs and graph Transformers, is still an open question.

\section{Theoretical Analysis}

\subsection{Theoretical Expressivity of PPGT}
\label{appx:expressivity}

Our PPGT with s$L_2$ attention (with URPE) and RRWP as graph PE falls within the WL-class of Generalized-Distance (GD)-WL~\cite{zhang2023RethinkingExpressivePower}.
The expressive power of GD-WL depends on the choice of generalized distance (i.e., graph positional encoding).
With a suitable graph PE, such as resistance distance (RD) and RRWP, 
GD-WL is strictly more expressive than 1-WL and is bounded by 3-WL (1-WL $\sqsupset$ GD-WL $\sqsupset$ 3-WL).

The detailed discussion of GD-WL GNNs and its theoretical expressivity can be found in \emph{Sec. 4} and \emph{Appx. E.3} of \citet{zhang2023RethinkingExpressivePower}, as well as in \emph{Sec. 5} of \citet{zhang2024ExpressivePowerSpectral}.
The discussion on RRWP can be found in \emph{Sec. 3.1.1} of \citet{ma2023GraphInductiveBiases}. The empirical study of realized expressivity (in Sec.~\ref{sec:brec}) also matches the theoretical expressivity of PPGT.

Based on the GD-WL analysis framework~\cite{zhang2023RethinkingExpressivePower}, the proof is straightforward.
However, we still provide a simple proof for the completeness of the conclusion.

\begin{proposition}\label{prop:ppgt_gdwl}
   Powerful Plain Graph Transformers (PPGT) with generalized distance (GD) as graph PE
    are as powerful as GD-WL, when choosing proper functions $\phi$ and $\theta$ and using a sufficiently large number of heads and layers. 
\end{proposition}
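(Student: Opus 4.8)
The plan is to treat each PPGT block as one GD-WL color-refinement step and to prove the two inclusions by induction on the number of layers $t$. For the easy direction---\textbf{PPGT is no more expressive than GD-WL}---I would first note that, with the relative PE $\mathbf{p}_{ij}$ set to an injective encoding of the generalized distance $d(i,j)$, the output $\mathbf{x}_i^{l}$ of a PPGT block depends on its input only through the self-representation $\mathbf{x}_i^{l-1}$ and the multiset $\mset{(\mathbf{p}_{ij},\mathbf{x}_j^{l-1}) : j \in V}$: the scores in \eqref{eq:sL2_urpe}, the value aggregation $\sum_j \alpha_{ij}\mathbf{v}_j$, and the FFN all act only on these quantities and are permutation equivariant. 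Since this is precisely the information GD-WL hashes at each step, a routine induction shows that two nodes sharing a GD-WL color at step $t$ receive identical block inputs and are therefore assigned identical representations.

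The substantive direction---\textbf{PPGT can simulate GD-WL}---requires exhibiting $\phi$, $\theta$, the query/key/value maps, and the FFN so that one block realizes an \emph{injective} function of $\big(\mathbf{x}_i^{l-1},\ \mset{(\mathbf{p}_{ij},\mathbf{x}_j^{l-1})}\big)$. I would begin with finiteness: over finitely many bounded-size graphs and finitely many refinement steps, the distances and the intermediate representations take finitely many values, so the target multiset lives in a finite domain and an injective code exists. I would then dedicate a head (or a bounded group of heads) to each distinct distance value, choosing the bias $\theta(\mathbf{p}_{ij})$ to concentrate the softmax on neighbors at that distance (so the bias acts as a near-hard mask) and using $\phi(\mathbf{p}_{ij})$ as the per-distance multiplicative weight guaranteed by the URPE enhancement \citep{luo2022YourTransformerMay}; the value map then carries an injective feature embedding, and the FFN---a universal approximator on the resulting finite code---implements the GD-WL hash.

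The crux, and the step I expect to be hardest, is that the softmax in \eqref{eq:sL2_urpe} is \emph{normalized}, so on its own it computes a weighted \emph{average} and erases the cardinalities that are essential for separating multisets such as $\mset{a,b}$ from $\mset{a,a,b,b}$ (cf.\ Sec.~\ref{sec:preliminary}). The key lemma is therefore to show that the URPE factor $\phi(\mathbf{p}_{ij})$---which breaks normalization, since $\sum_j \alpha_{ij} = \sum_j \phi(\mathbf{p}_{ij})\,\Softmax_j(\cdot)$ need not equal $1$---together with the injected graph-order feature and the magnitude-preserving s$L_2$ and AdaRMSN components, lets each block recover a \emph{sum}-style aggregation within every distance bucket. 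Sum aggregation over a finite feature alphabet is injective on multisets \citep{xu2019HowPowerfulAre}, which restores the count information, yields an injective per-step update, and, composed over sufficiently many layers and heads, matches GD-WL stepwise and completes the argument.
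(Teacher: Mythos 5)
Your overall skeleton matches the paper's: restrict to graphs of bounded size so that distances and intermediate colors take finitely many values, rewrite one GD-WL step as hashing a tuple of per-distance color multisets, dedicate a head to each distance value, and invoke Lemma~5 of \citet{xu2019HowPowerfulAre} together with the FFN to realize the injective hash. (Your first direction, that PPGT is bounded by GD-WL, is routine and fine; the paper does not even spell it out.) But your central mechanism has a genuine gap, and it sits exactly at the step you flagged as the crux. You propose using $\theta(\mathbf{p}_{ij})$ as a near-hard mask so the softmax concentrates on nodes at distance $d_{G,h}$. That makes the softmax normalization per-bucket and node-dependent: with $\theta$ large on the bucket $B_h(i)=\{j: d_G(i,j)=d_{G,h}\}$, the weights within the bucket sum to (approximately) one, so the head outputs $\phi(d_{G,h})\cdot\frac{1}{|B_h(i)|}\sum_{j\in B_h(i)}\mathbf{v}_j$ --- a within-bucket \emph{average}. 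Your rescue, that $\phi$ ``breaks normalization,'' does not work: $\phi(\mathbf{p}_{ij})$ is constant on the bucket and independent of $i$, so it is a fixed scalar per head that cannot reintroduce the node-dependent cardinality $|B_h(i)|$; likewise the injected graph order supplies $|\mathcal{V}|$, not $|B_h(i)|$. Hence multisets differing only in within-bucket multiplicities (your own $\mset{a,b}$ versus $\mset{a,a,b,b}$ example) remain indistinguishable under your construction, and the claimed key lemma fails as stated.

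The paper's construction sidesteps this by doing the opposite with the two PE terms: it zeroes the query/key projections and sets $\theta\equiv 0$, so the softmax is \emph{uniform} over all $|\mathcal{V}|$ nodes --- a global, node-independent normalization --- and uses $\phi^h(d):=\mathbb{I}(d=d_{G,h})$ as the bucket selector. Each head then outputs $\frac{1}{|\mathcal{V}|}\sum_{u\in B_h(v)}\mathbf{x}_u$, i.e., the bucket \emph{sum} up to the global factor $1/|\mathcal{V}|$, which is the same for every node and bucket and can be extracted with one additional head and undone in the subsequent FFN; sum aggregation then makes Lemma~5 applicable and the per-step update injective. Your proposal could be repaired along the same lines --- for instance, keep your masked heads but add, per distance, a companion head computing the relative bucket size $|B_h(i)|/|\mathcal{V}|$ (uniform attention, indicator $\phi$, all-ones values), letting the FFN recombine average and size on the finite domain --- but the natural way to obtain those sizes is precisely the uniform-attention-plus-indicator construction, so the cleaner fix is to adopt the paper's mechanism outright.
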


For a graph $\mathcal{G} = (\mathcal{V}, \mathcal{E})$,   
the iterative node color update in GD-WL test is defined as:
\begin{align}
\chi^{\ell}_{\mathcal{G}}(v) = \textit{hash}(\{\!\!\{(d_{\mathcal{G}}(v, u), \chi^{\ell-1}_{\mathcal{G}}(u)):u \in \mathcal{V}\}\!\!\})\,.
\label{eq:GD-WL}
\end{align}
where $d_{\mathcal{G}}(v, u)$ denotes a distance between nodes $v$ and $u$, and $\chi_G^0(v)$ is the initial color of $v$. 
The multiset of final node colors $\multiset{\chi_G^L(v): v \in \V}$ at iteration $L$ is hashed to obtain a graph color. 

\begin{lemma}{(Lemma 5 of ~\citet{xu2019HowPowerfulAre})}
    \label{lemma:hash_agg-sp}
         For any countable set $\cX$,
         there exists a function $f:\cX \to \RR^n$ such that
         $h(\hat{\cX}):=\sum_{x \in \hat{\cX}} f(x)$ is unique for each multiset $\hat{\cX} \in \cX$ of bounded size.
         Moreover, for some function $\phi$,
         any multiset function $g$ can be decomposed as $g(\hat{\cX})=\phi(\sum_{x \in \hat{\cX}} f(x))$.
    \end{lemma}

\newcommand{\sd}{\text{d}_{G}^{\text{SPD}}}
\renewcommand{\rd}{\text{d}_{G}^{\text{RD}}}
\newcommand{\gd}{\text{d}_{G}}

\begin{proof}[Proof of Proposition~\ref{prop:ppgt_gdwl}]
    In this proof, we consider shortest-path distance (SPD) as an example of generalized distance (GD), denoted as $\sd$,
    which can be directly extended to other GDs such as the resistance distance (RD)~\cite{zhang2023RethinkingExpressivePower} and RRWP~\cite{ma2023GraphInductiveBiases}. 
    Note that the choice of GD determines the practical expressiveness of GD-WL.
        
        We consider all graphs with at most $n$ nodes to distinguish in the isomorphism tests.
        The total number of possible values of $\gd$ is finite and depends on $n$ (upper bounded by $n^2$).
        We define
        \begin{align}
            \mathcal{D}_n = \{\sd(u,v): G=(\cV, \cE), |\cV| \leqslant n, u, v\in \cV\}\,,\label{eq:D_n}
        \end{align} 
        to denote all possible values of $\sd(u,v)$ for any graphs with at most $n$ nodes. 
        We note that since $\mathcal{D}_n$ is a finite set, its elements can be listed as $\mathcal{D}_n=\{
        d_{G,1},\cdots,d_{G,|\mathcal{D}_n|}\}$.
        
        Then the GD-WL aggregation at the $\ell$-th iteration in Eq.~\eqref{eq:GD-WL} can be equivalently rewritten as (See Theorem E.3 in~\citet{zhang2023RethinkingExpressivePower}):
        \begin{align}
           &\chi^{\ell}_G(v):= \text{hash}\Big(\chi_G^{\ell,1}(v), \chi_G^{\ell,2}(v), \cdots, \chi_G^{\ell, |\cD_n|}(v) \Big) \,,\nonumber \\
           \text{where } &\chi_G^{\ell,k}(v):= 
           \multiset{\chi_G^{\ell-1}(u): u \in \cV, \gd(u,v)=d_{G,k} }\,.\label{eq:gd_wl_hash}
        \end{align}
        
        In other words, for each node $v$, we can perform a color update by hashing a tuple of color multisets determined by the $\gd$.
         We construct the $k$-th  multiset by injectively aggregating
        the colors of all nodes $u \in \cV$ at a specific distance $d_{G,k}$ from node $v$.
        
        Assuming the color of each node $\chi^l_G(v)$ is represented as a vector $\bx_v^{(l)} \in \RR^C$, 
        by setting the query and key projection matrices $\mathbf{W}_Q, \mathbf{W}_K$ as zero matrices and $\theta$ as a zero function  following~\citet{zhang2023RethinkingExpressivePower},
        the attention layer of PPGT with URPE of the $h$-th head (Eq.~\eqref{eq:sL2_urpe}) can be written as
        \begin{equation}
            \hat{\mathbf{x}}^{(l), h}_v :=  \frac{1}{|\cV|}\sum_{u \in \cV} (\mathbf{W}_O^h \mathbf{W}_V^h \mathbf{x}^{(l)}_u) \cdot \phi^h\big(\gd(u,v)\big) \,.   
            \label{eq:ppgt-proof}
        \end{equation}
By defining $\phi(d):= \mathbb{I}(d=d_{G,h})$, where $\mathbb{I}:\RR \to \RR$ is the indicator function, $d_{G,h} \in \cD_n$ is a pre-determined condition,
we can have
        \begin{equation}
            \begin{aligned}
                \hat{\mathbf{x}}^{(l), h}_v 
                =&\frac{1}{|\cV|} \sum_{\gd(u,v)=d_{G,h}} \bx^{(l)}_u\,.
            \end{aligned}
            \label{eq:ckgconv_grouped}
        \end{equation}
where $\mathbf{W}_O^h \text{ and } \mathbf{W}_V^h$ are dropped since they can be absorbed into the following feed-forward networks (FFNs).
    Note that the constant $\frac{1}{|\cV|}$ can be extracted with an additional head and injected back to node representation in the following FFNs.
     
Then, we can invoke Lemma~\ref{lemma:hash_agg-sp} to establish that each attention head of PPGT can implement an injective aggregating function for 
        $\multiset{\chi_G^{l-1}(u): u \in \mathcal{V}, d_G(u,v)=d_{G,h}}$.
        The summation/concatenation of the output from attention heads is an injective mapping of the tuple of multisets $\left(\chi_G^{l,1}, \cdots, \chi_G^{l, |\cD_n|} \right)$.
     When any of the linear mappings has irrational weights, 
        the projection will also be injective.
    Therefore, with a sufficiently large number of attention heads, 
    the multiset representations $\chi_G^{l, k}, k \in [|\cD_n|]$ can be inejectively obtained.
 
        Therefore, with a sufficient number of attention heads and a sufficient number of layers,
         PPGT is as powerful as GD-WL in distinguishing non-isomorphic graphs, which concludes the proof.
    \end{proof}

It is worth mentioning that the expressivity upper bound of GD-WL (i.e., 2-FWL/3-WL), given by Theorem 4.5 in \citet{zhang2023RethinkingExpressivePower}, is based on the usage of generalized distance (e.g., resistance distance) as graph PE.
If we add PE that provides finer structural information, 
PPGT might surpass the 3-WL expressivity upper bound.
This is empirically verified in Section~\ref{sec:brec}, 
where we demonstrate using I$^2$GNN to generate PE for PPGT.


\subsection{LN and RMSN are Magnitude Invariant}

\begin{proposition}\label{prop:ln_rmsn_magnitude}
LN and RMSN are magnitude-invariant, i.e., for an input vector $\bx \in \RR^D$ and a positive scalar $c \in \RR^+$,
$\LN(c\cdot \bx)=\LN(\bx)$ and $\RMSN(c\cdot\bx)=\RMSN(\bx)$.
\end{proposition}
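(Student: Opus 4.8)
The plan is to verify both identities by direct substitution, exploiting the fact that every operation appearing in the two normalizers is either linear or absolutely homogeneous, so that the scalar $c$ factors through numerator and denominator and cancels. Throughout I would use $c \in \RR^+$, hence $|c| = c$, which is exactly what makes the cancellation clean rather than introducing a sign.

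First I would dispatch the RMSN case, which is simpler. Writing out $\RMSN(c\bx)$, the numerator is $c\bx$ and the denominator is $\tfrac{1}{\sqrt{D}}\|c\bx\| = \tfrac{c}{\sqrt{D}}\|\bx\|$ by absolute homogeneity of the $L_2$ norm together with $c>0$. The common factor $c$ then cancels between numerator and denominator, while the elementwise affine term $\boldsymbol{\gamma}$ is untouched, yielding $\RMSN(c\bx) = \RMSN(\bx)$.

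For LN the only extra ingredient is that the centering step commutes with scaling. I would set $\mu(\bx) := \tfrac{1}{D}\mathbf{1}^\intercal\bx$ (reading the scalar subtraction in the definition as subtraction of $\mu(\bx)\mathbf{1}$), and note that $\mu$ is linear, so $\mu(c\bx) = c\,\mu(\bx)$ and hence the centered vector satisfies $c\bx - \mu(c\bx)\mathbf{1} = c\,(\bx - \mu(\bx)\mathbf{1})$. Taking $L_2$ norms and again using $c>0$ produces a factor $c$ in the denominator matching the factor $c$ in the centered numerator; these cancel, leaving $\boldsymbol{\gamma}$ and $\boldsymbol{\beta}$ unaffected, so $\LN(c\bx)=\LN(\bx)$.

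There is essentially no hard step here; the one point requiring care is the degenerate case in which a denominator vanishes --- $\bx=\mathbf{0}$ for RMSN, or $\bx$ having all coordinates equal for LN --- where the expressions are undefined. I would simply restrict to inputs on which the normalizers are defined, observing that multiplying by $c>0$ preserves both the condition $\bx\neq\mathbf{0}$ and the condition that $\bx$ is non-constant, so whenever the left-hand side is defined the identity holds verbatim. In practice this point is moot, since implementations add a small $\epsilon$ to the denominator.
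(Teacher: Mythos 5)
Your proof is correct and follows essentially the same route as the paper's: direct substitution, factoring out $c$ via absolute homogeneity of the $L_2$ norm together with $c>0$, and cancellation, with the LN case reduced to the RMSN computation after noting that centering commutes with scaling. Your extra remark on the degenerate inputs where the denominator vanishes is a sensible refinement the paper omits, but it does not change the argument.
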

\label{appx:magnitud_invariant}
\begin{proof}[Proof of Proposition~\ref{prop:ln_rmsn_magnitude}]
From Eq.~\eqref{eq:def_norm}, we immediately have:
\begin{align}
    \text{RMSN}(c\cdot\bx) &:= \frac{c \cdot \mathbf{x}}{\frac{1}{\sqrt{D}}\|c \cdot \mathbf{x}\|} \cdot \boldsymbol{\gamma}\,,\nonumber\\
    &= \frac{\cancel{c} \cdot \mathbf{x}}{\frac{\cancel{c}}{\sqrt{D}} \cdot \|\mathbf{x}\|} \cdot \boldsymbol{\gamma}
    \,, \text{ since vector norm is absolutely homogeneous and $c>0$ }\nonumber\\
    &= \text{RMSN}(\bx), \forall c>0\,,
\end{align}
which proves the proposition for RMSN. 
The proof for LN follows the same steps.
\end{proof}

\section{Limitations and Discussion}
\label{appx:limitations}

As plain Transformers, PPGTs still require $O(N^2)$ computational complexity like other plain Transformers.

This limits the ability of PPGTs to handle very large-size inputs,
which are known as large-scale graphs in the context of graph learning.

Note that the large-scale graph data is a different concept from large-scale graphs.
We provide a more detailed discussion here.

\subsection{Large-scale Graph Datasets v.s. Large-scale-graph Datasets}
\label{appx:large_scale}

As foundation models become increasingly popular,
researchers have grown interested in the models' capacity to learn from massive volumes of data,
a.k.a., large-scale data.

However, in the domain of graph learning, 
persistent confusion remains when discussing scaling up graph models for large-scale datasets, particularly concerning two distinct concepts: "large-scale datasets" versus "large-scale-graph datasets."

We provide a preliminary clarification regarding them:
\begin{itemize}
    \item \textbf{Large-scale (graph) datasets} (a large number of examples, i.e., graphs): This aligns with the conventional understanding of dataset scale in machine learning (e.g., language and vision), where capacity is the crucial factor for learning from the vast volume of training data.
    This type of dataset is the usual scenario for training large foundation models.
    \item \textbf{Large-scale-graph dataset} (many nodes in a graph): This corresponds to the challenges in long-context learning in language tasks and gigapixel image processing in vision tasks. 
    The challenges mainly lie in the computational efficiency and memory consumption of the models.
    These datasets are usually small-scale in terms of examples, typically containing only one graph per dataset.
                                     
\end{itemize}

Even though these two research directions are both highly important in solving real-world problems.
The improvements on "scalability" and on "capacity" of (graph) models are generally two distinct research directions that are often mutually exclusive in practice.

\subsection{Efficiency Techniques for Plain Transformers}

As plain Transformers, PPGTs can potentially adopt several efficiency techniques developed for plain Transformers in graph-related and other domains, directly or with modifications.  \\
BigBird~\cite{zaheer2020BigBirdTransformers} and Exphormer~\cite{shirzad2023ExphormerSparseTransformers} propose building sparse attention mechanisms through sparse computational graphs, using random graphs and expander graphs for language tasks and graph learning tasks, respectively.
They can be directly adopted by PPGTs. \\
Performer~\cite{wu2021PerformanceAnalysisGraph} proposes building low-rank attention via  positive orthogonal random features approaches,
which can be directly applied to s$L_2$ attention.
However, it is not directly compatible with relative PE. \\
\citet{ainslie2023GQATrainingGeneralized} proposes grouped-query attention (GQA) to save the memory bandwidth.
It is a more specific design for decoder-only Transformers (for KV-cache),
 and might not introduce remarkable benefits to encoder-based Transformers, such as 
ViTs~\cite{dosovitskiy2021ImageWorth16x16, touvron2021TrainingDataEfficientImage, liu2021SwinTransformerHierarchical}, 
Time-series Transformers~\cite{nie2023TimeSeriesWorth, zhang2024MultiresolutionTimeSeriesTransformer} and most GTs~\cite{kreuzer2021RethinkingGraphTransformers,ying2021TransformersReallyPerform,zhang2023RethinkingExpressivePower,ma2023GraphInductiveBiases} including PPGTs.

We mention only a few efficiency techniques for plain Transformers here. 
Owing to the plain Transformer architecture, PPGTs can potentially adopt efficiency techniques developed in other domains and leverage their training advances.

\subsection{Impact Statement}

This paper presents work whose goal is to advance the field of Geometric/Graph Deep Learning.
There are many potential societal consequences of our work, none of which we feel must be specifically highlighted here.

\end{document}